%% file: main.tex
\pdfoutput=1
\documentclass[11pt]{article}

\usepackage[margin=1in]{geometry}
\usepackage[T1]{fontenc}
\usepackage[utf8]{inputenc}
\usepackage{lmodern}
\usepackage{microtype}
\usepackage{amsmath,amssymb,amsthm,mathtools}
\usepackage{booktabs}
\usepackage{array}
\usepackage{xcolor}
\usepackage{caption}
\usepackage{tikz}
\usepackage{pgfplots}
\usepackage{flafter}
\usepackage{placeins}
\usepackage{enumitem}
\usepackage[numbers,sort&compress]{natbib}
\usepackage{hyperref}
\usepackage[nameinlink,capitalise]{cleveref}

\usetikzlibrary{arrows.meta,backgrounds,calc,decorations.pathreplacing,fit,
  patterns,positioning,shapes.geometric}
\input{figures/figure-style}

\hypersetup{
  colorlinks=true,
  linkcolor=blue!55!black,
  citecolor=green!35!black,
  urlcolor=blue!65!black,
  pdftitle={Toward Machine Learning with the Unit as a Primitive: Learning from Unit-Linked Events},
  pdfauthor={Heyang Gong}
}

\theoremstyle{definition}
\newtheorem{definition}{Definition}
\newtheorem{assumption}{Assumption}
\newtheorem{proposition}{Proposition}
\newtheorem{lemma}[proposition]{Lemma}
\newtheorem{corollary}[proposition]{Corollary}
\crefname{assumption}{assumption}{assumptions}
\Crefname{assumption}{Assumption}{Assumptions}

\newcommand{\cU}{\mathcal{U}}
\newcommand{\cZ}{\mathcal{Z}}

\title{Toward Machine Learning with the Unit as a Primitive:\\
Learning from Unit-Linked Events}
\author{Heyang Gong}
\date{August 26, 2026}

\begin{document}
\maketitle

\begin{abstract}
Machine learning is usually formalized through samples, while the persistent
individual to which multiple observed or possible events refer often remains
implicit. We propose the \emph{unit} as an explicit primitive at the level of
task semantics. A learning task first declares a population of persistent
referents and a sameness criterion; the realized value $u$ denotes the selected
referent. Supervised learning is the main formal specialization. Its semantic
object is a family of unit-conditioned response laws. Homogeneity is the special
case in which those laws coincide; a sample-only conditional is silent as to
whether the world is homogeneous or the observed law is only the marginal of a
heterogeneous family. What is learned from data is a pair $(T_\phi,R_\theta)$: a
tokenizer that produces a contextual unit token and one shared response-law form
that reads it. The structured class takes that form to be a simple relation in
the token; a linear predictor is the running instance.
The token is the learner-side representation through which the task-side unit
affects prediction, while a learner specification that omits unit information is
unit-insensitive; homogeneity remains a property of
the world-side response family. When identity is unresolved, the world-side law
mixes unit-conditioned targets, while the learner composes its shared form with a
token. A trusted resolver may fix the unit and supply a lookup token; otherwise
\emph{unit abduction} forms a token of the same type from factual
evidence. Unlinked single-row observations can fail to distinguish a
heterogeneous unit world from a homogeneous pooled world; trusted same-unit
pairs separate a restricted witness. The formal results concern this supervised
specialization.
\end{abstract}

\section{Introduction}

The standard supervised-learning formulation starts from records
$\mathcal D=\{(x_i,y_i)\}_{i=1}^{N}$ and a map
$f:\mathcal X\to\mathcal Y$
\citep{bishop2006pattern,hastie2009elements,shalevshwartz2014understanding}.
The index $i$ identifies a record, not the
individual to which that record refers. If two records concern the same
patient, user, device, organism, or other persistent individual, ordinary
notation represents this relation only through additional metadata or an
implicit modeling convention. This omission is harmless for some row-level
questions, but consequential when events share an individual, queries change
while that individual is held fixed, or event-to-individual attribution is
uncertain. It also leaves a more basic question unstated: whether different
individuals share one response law or whether the observed row-level law pools
distinct unit-conditioned relationships.

Persistent individuals are not new to statistics or machine learning.
Repeated-measures and random-effects models link observations through a
supplied subject or group and model subject-specific variation
\citep{laird1982random,gelman2006data}. Potential-outcome frameworks compare
alternative treatment responses for the same experimental unit
\citep{rubin1974causal}. Recommender systems use supplied user IDs to address
user-specific representations learned across interactions
\citep{mnih2007pmf,koren2009matrix}, while record linkage and entity resolution
model uncertainty about whether records concern the same underlying entity
\citep{steorts2016entity}. Each tradition develops a rich local theory with its
own assumptions and targets.

Together, these traditions suggest a common machine-learning formulation with a
semantic order that is usually left implicit. A task first declares a population
of persistent referents and a sameness criterion. It then declares the learning
object attached to those referents and the across-unit structure that supports
joint learning. Only after these choices does learner access become relevant:
attribution may be supplied directly or inferred from evidence. We call the
persistent referent a \emph{unit} and elevate it to a primitive of the learning
problem. Let
\begin{equation}
  U:\Omega\longrightarrow\cU,
  \qquad
  U\sim\Pi,
  \qquad
  U=u,\quad u\in\cU,
  \label{eq:unit-population}
\end{equation}
represent population-to-individual selection and its realization.

\paragraph{Unit declaration.}
A learning task declares a population of possible units, a criterion under
which observed or possible events concern the same persistent referent, and the
span over which that relation is retained. The realization $U=u$ denotes that
referent. The law $\Pi$ describes population selection; the data-collection
protocol separately determines the joint law of dataset attributions. The unit
declaration precedes the task-specific choice of learning object and does not by
itself impose a response law, loss, model class, or causal semantics.

The organizing claim of the paper is that \emph{machine learning learns shared
structure across task-declared units from noisy, selectively observed,
unit-linked events}. The primitive states what persists and which events belong
together. Supervised learning is the main formal specialization: a fixed unit
selects an entire unit-conditioned response law,
\begin{equation}
  \begin{aligned}
  u &\longmapsto \bigl[\,x\mapsto P_u^\star(dy\mid x)\,\bigr],\\[-1pt]
  P_u^\star(dy\mid x)
    &:=P^\star(Y\in dy\mid X=x,U=u).
  \end{aligned}
  \label{eq:unit-selects-law}
\end{equation}
Changing $x$ queries another location on the same response surface; changing
$u$ selects another member of the family. A family of unrelated maps
$\{R_u\}$ is too flexible to learn from finite data, including for a previously
unseen unit. Joint learning therefore requires a shared restriction across
units, developed in \cref{sec:response-law-constraints}: a tokenizer produces a
contextual unit token, and one shared response-law form reads it.

\begin{figure}[t]
  \centering
  \input{figures/fig1-events-unit}
  \caption{\textbf{From the unit declaration to linked events and response laws.}
  The task specifies the unit boundary and sameness criterion; $U\sim\Pi$ is the
  population statement and $U=u$ its realized unit. Several event records may
  be attributed to the same $u$, and that same referent indexes a world-side
  response law. Holding $u$ fixed preserves the referent while allowing context
  and event variation.}
  \label{fig:events-unit}
\end{figure}

The paper makes three contributions.
\begin{enumerate}[leftmargin=*,itemsep=3pt]
  \item It formalizes the unit as a task-declared population primitive: a
  sample records an event, and the task first declares the persistent referent
  to which events belong.
  \item It states a shared-form interface
  (\cref{hyp:shared-form-token}) whose structured class is a simple relation in
  the token, with a linear predictor as the running instance
  (\cref{def:linear-shared-form}); the learned object is the pair
  $(T_\phi,R_\theta)$.
  \item It separates oracle predictive value, evidence access, learner
  approximation, and a single-row impossibility boundary, with evaluation
  corollaries for row versus unit weighting and for record-wise versus
  unit-disjoint splits.
\end{enumerate}

\section{The Unit as a Machine-Learning Primitive}
\label{sec:unit-primitive}

\subsection{Samples, Events, and Task-Declared Units}

For the supervised specialization developed below, a conceptual complete-data
representation with ground-truth attribution is
\begin{equation}
  \mathcal D_U^{\rm world}
  =\{(x_i,y_i;u_i)\}_{i=1}^{N},
  \qquad u_i\in\cU.
  \label{eq:world-unit-table}
\end{equation}
The input--response pair $(x_i,y_i)$ is the event content used in this
specialization. The semicolon marks the more general role distinction between
that event content and $u_i$, which identifies the individual to which the event
is attributed.
The sample index remains only bookkeeping. In particular,
\begin{equation}
  u_i=u_j
  \quad\Longleftrightarrow\quad
  \text{events $i$ and $j$ concern the same individual},
  \label{eq:same-unit}
\end{equation}
whenever the protocol supplies reliable attribution. Equality records the same
persistent referent. Predictive equivalence is weaker: two distinct units may
induce the same response law.

The space $\cU$ is task-declared and need not be numerical. Direct access
exposes three objects:
\begin{equation}
  \underbrace{k}_{\text{trusted key}},
  \qquad
  \underbrace{u(k)}_{\text{task-side unit}},
  \qquad
  \underbrace{Z_{u(k)}^c}_{\text{learner-side lookup token}}.
  \label{eq:direct-address-content}
\end{equation}
The map \(k\mapsto u(k)\) resolves the referent, and lookup supplies the token
through which $R_\theta$ reads it in context $c$. In an ID-indexed model
\begin{equation}
  Z_{u(k)}^c
  \;=\;
  z_\theta(k),
  \label{eq:lookup-token-row}
\end{equation}
an embedding, preference factor, random effect, or other parameter block, with
any remaining answer-time variation absorbed into \(c\). Its coordinates are
model-internal and need not be unique; the semantic and inferential distinctions
are summarized in \cref{app:interface-distinctions}
\citep{bengio2013representation}.

What is carried consistently from attribution into task-specific learning is
the resolved referent. All records attributed to \(u(k)\) read or update the
token associated with that same key. Accordingly, \(P_\theta(dy\mid x,u)\)
semantically denotes a learned response law indexed by a fixed unit;
computationally it is the shared form \(R_\theta\) evaluated at that unit's
token \(Z_u^c\). In an ID-indexed model it may be computed from
\(z_\theta(k)\). A dynamic history-dependent response state, when used, belongs
to the tokenizer's context argument or to the response model, and may change
while the referent stays fixed.

When neither unit attribution nor a trusted referential resolver is
learner-visible, the
observed data may contain only
\begin{equation}
  \mathcal D^{\rm obs}=\{(x_i,y_i)\}_{i=1}^{N}.
  \label{eq:observed-row-table}
\end{equation}
The complete-data representation in \cref{eq:world-unit-table} is therefore
distinct from observations with learner-visible attribution,
\begin{equation}
  \mathcal D_U^{\rm obs}=\{(x_i,y_i;u_i)\}_{i=1}^{N},
  \label{eq:observed-unit-table}
\end{equation}
  in which persistent-unit labels, or a trusted resolver that determines them,
  are available to the learner. This attribution guarantee reveals which
  referent and parameter address an event concerns; it does not make learned
  embeddings, random effects, or response states observed data.

\subsection{Fixing the unit does not fix the event}

In supervised learning, the fixed-unit response law
$P_u^\star(dy\mid x)$ from \cref{eq:unit-selects-law} may remain stochastic.
Holding $u$ fixed conditions on the same individual
while allowing repeated measurements, choices, or outcomes to differ. Event
noise, time-varying state, and other exogenous
variation can remain after unit selection. The unit primitive therefore
separates ``which individual?'' from ``what happens for that individual here?''
as shown in \cref{fig:events-unit}. Independence, causal semantics, and
deterministic responses are separate modeling commitments. The concrete
three-event inset has three samples but only two units: events~1 and~3 share
$u_A$ while their inputs and realized responses remain event-specific.

The unit declaration precedes the choice between observational and causal
semantics. Causal estimands require additional commitments; one such
specialization is recorded in \cref{app:discoscm-specialization}.
Direct unit access and unit abduction provide two learner-side modes for obtaining the
same task-declared unit (\cref{sec:unit-access}).

\section{Learning a Family of Unit-Conditioned Response Laws}
\label{sec:supervised-specialization}

This paper develops supervised learning as the main formal specialization of
the unit primitive because the ordinary object $P(dy\mid x)$ makes the
consequences of a persistent unit especially transparent. Attribution is
treated separately in \cref{sec:unit-access}. Once the unit is declared, the
semantic object is the family
$\{P_u^\star(dy\mid\cdot):u\in\cU\}$ in
\cref{eq:unit-selects-law}. The next subsection states the
learner-side factorization: every path from the unit to prediction passes
through a contextual token, all units use one response-law form $R_\theta$, and
every unit-specific difference represented by the learner must pass through the
token. The supervised question is therefore not how a
particular architecture encodes $u$, but whether a tokenizer can organize
unit-specific regularities so that one shared $R_\theta$ becomes simple enough
to generalize.
Conceptually, a fixed $u$ selects an entire response surface over the complete
task-declared response input $x$. Changing $x$ moves to another location on
that same surface while $u$ remains fixed.
Across units, the dependence may enter through a baseline shift $\alpha(u)$, a
changed input effect $\beta(u)$, a response function $f_u$, an outcome-noise law,
or the support, threshold, ranking preference, or query sensitivity of the
response. Under \cref{hyp:shared-form-token}, all of these differences are carried by the
token. Concatenation, attention, and modulation may implement the tokenizer or a
richer readout; the structured class below takes the readout $R_\theta$ to be a
simple relation in the token, with a linear predictor as the running instance.

Three layers must be distinguished. If units are declared and
$P_u^\star(dy\mid x)$ is the same for almost every $u$, the world is a
\emph{homogeneous unit extension} of ordinary supervised learning. Separately,
a heterogeneous family can be marginalized over $U$ to produce the same
row-level conditional law $P^\star(dy\mid x)$. At the computational layer, a
unit-omitting learner specification is unit-insensitive; a constant token or a
response form that ignores its token is a canonical realization. This learner-side
restriction and the world-side homogeneity layer are distinct, so row-level notation and fit alone do
not choose between them. \Cref{prop:single-row-collapse} makes the ambiguity
formal and gives one observation protocol that can separate a restricted pair of
worlds; the boundary cases are collected in
\cref{app:interface-distinctions}.

\subsection{Response Heterogeneity Across Units}

The first distinction is whether changing the unit changes that conditional
relation at all.

\begin{definition}[Unit-response homogeneity and heterogeneity]
\label{def:unit-response-heterogeneity-main}
Fix the complete response inputs declared by a supervised-learning task. The
declared family of unit-conditioned response laws is \emph{unit-response
homogeneous} if there exist a $\Pi$-null set $N\subseteq\cU$ and a common
conditional response kernel such that, for every
$u\notin N$ and every declared input $x$,
\begin{equation}
  P_u^\star(dy\mid x)
  =P^\star(dy\mid x).
  \label{eq:main-response-homogeneity}
\end{equation}
Here the right-hand side denotes the common response kernel shared across
units; it is not defined by marginalizing over $U$. For compatible conditional
versions, however, homogeneity makes this kernel equal to the unit-marginal
response law $P^\star(Y\in dy\mid X=x)$ for $P_X$-almost every $x$. Thus the two
uses of $P^\star(dy\mid x)$ agree under the homogeneous restriction.
The family is \emph{unit-response heterogeneous} otherwise.
\end{definition}

This definition concerns variation in the response law and is independent of a
particular parameterization, architecture, or clustering device. Identity and
predictive behavior remain distinct: two individuals may induce the same
response law while remaining different units. The appendix gives the jointly
measurable-kernel formulation, its deployment-visible qualification, and the
usual almost-everywhere and off-support boundaries in
\cref{def:unit-response-heterogeneity}.

The homogeneity distinction is defined only after the task has specified what
counts as a unit, and it is a world-side property independent of a particular
learner. The computational counterparts---a unit-insensitive learner, a
structured pair $(T_\phi,R_\theta)$, and a saturated private-law class---are
developed in \cref{sec:response-law-constraints}.

\subsection{Shared response-law form and contextual unit tokens}
\label{sec:response-law-constraints}

Heterogeneity characterizes world-side variation among response laws. Joint
learning requires a separate computational restriction: we do not fit an
unrelated mechanism $R_u$ for each unit. That class is too flexible.

When the response specification distinguishes a focal query $x$ from an answer-time
context $c$, write $P_u^\star(dy\mid x,c)$ for the complete conditional; $c$ may
otherwise be absorbed into a larger $x$. Given factual evidence attached to a
unit and the current context, a tokenizer produces a contextual unit token
\begin{equation}
  Z_u^c
  =
  T_\phi(\mathcal O_u,c)
  \in\cZ.
  \label{eq:unit-tokenizer}
\end{equation}

\begin{assumption}[Shared form, contextual tokens]
\label{hyp:shared-form-token}
For the learner in a given context, every path by which the selected unit can
affect prediction is mediated by a contextual unit token, and all units use one
shared response-law form. Any unit-specific difference represented by the
learner must be expressed by the token.
\end{assumption}

The unit \(u\) is the task-side referent and \(Z_u^c\) is a learner-side
representation.
\Cref{hyp:shared-form-token} is an interface. The structured class of this paper
requires a simple shared relation in the token: unit-specific differences enter
$R_\theta$ only through a declared low-complexity map of $z$, while the query
maps may remain nonlinear. A finite-dimensional linear predictor
(\cref{def:linear-shared-form}) is the running instance; other declared simple
maps of the token are the same restriction, not a second theory. Its boundary cases
are collected in \cref{app:interface-distinctions}.
The object $Z_u^c$ need not be a deterministic vector. A deterministic tokenizer
may return a vector, a distribution, or another typed computational object; a
fixed embedding \(z_\theta(k)\) is the simplest lookup special case. Alternatively,
the tokenizer itself may be stochastic, in which case it returns a law over
token values. These are different typed constructions. A distribution-valued token, say
$Z_u^c=\mu_u^c$, is passed as one typed object to
$R_\theta(dy\mid x,c,\mu_u^c)$; it is not the sampling randomness marginalized
in \cref{eq:shared-response-random-token}. In every case the token is
learner-side information about the unit under the present evidence and context,
not a second copy of the unit. It may be read loosely as a belief about the unit,
but it is not required to be a posterior over identity, and it is not required
to recover a unique complete latent unit.

The unit--token relation is a representation map, not a bijection. The same unit may
receive different tokens in different contexts,
\begin{equation}
  Z_u^c\neq Z_u^{c'}.
  \label{eq:token-context}
\end{equation}
Distinct units may receive the same token, or the same token law, in a given
context,
\begin{equation}
  u\neq v,
  \qquad
  Z_u^c\overset{d}{=}Z_v^c.
  \label{eq:token-noninjective}
\end{equation}
Token coincidence is computational indistinguishability under the current query
scope. It does not identify the units: referential identity remains $u\neq v$.

For a deterministic token, the learner uses one shared response mechanism
\begin{equation}
  \boxed{
  \widehat P_{\theta,\phi}(dy\mid\mathcal O_u;x,c)
  :=
  R_\theta(dy\mid x,c,Z_u^c).}
  \label{eq:shared-response-form}
\end{equation}
If the tokenizer is a stochastic kernel over token values, the learner instead
uses the integral
\begin{equation}
  \widehat P_{\theta,\phi}(dy\mid\mathcal O_u;x,c)
  :=
  \int
  R_\theta(dy\mid x,c,z)\,
  T_\phi(dz\mid\mathcal O_u,c).
  \label{eq:shared-response-random-token}
\end{equation}
The essential point is that $R_\theta$ is the same learned map for every unit;
neither display defines the world target $P_u^\star$.

\begin{definition}[Linear shared form]
\label{def:linear-shared-form}
Fix a finite dimension $d$ and let tokens take values in $\mathbb R^d$. A shared
form $R_\theta$ is \emph{linear in the token} when there exist maps
$\alpha_\theta(x,c)\in\mathbb R$ and $\psi_\theta(x,c)\in\mathbb R^d$ such that
\begin{equation}
  \eta_\theta(x,c,z)
  =
  \alpha_\theta(x,c)+\langle\psi_\theta(x,c),z\rangle
  \label{eq:linear-predictor}
\end{equation}
and $R_\theta(dy\mid x,c,z)$ depends on $z$ only through this linear predictor,
via a declared exponential-family, GLM, or scoring link. The maps
$\alpha_\theta$ and $\psi_\theta$ may be nonlinear in $(x,c)$. Finite-type
models arise when tokens are further restricted to a finite set or simplex in
$\mathbb R^d$; a low-rank model is the same class at small $d$.
\end{definition}

\Cref{hyp:shared-form-token} remains the interface. The structured class is a
restriction on how $R_\theta$ reads the token, not a private mechanism per unit:
$R_\theta$ depends on $z$ only through a declared simple relation, while $T_\phi$
may remain complex. \Cref{def:linear-shared-form} is the running instance.
A finite-type token or another declared low-complexity map of $z$ instantiates
the same class; they are not developed here. Misspecification
is residual unit-specific variation that the declared simple relation cannot
absorb. Without a bound on $d$, or with an unrestricted nonlinear readout of $z$,
the factorization is again a reparameterization.
Random-intercept and random-slope models, matrix factorization, and a linear
last-layer over a query encoder instantiate the linear running instance; they
differ in how $\alpha_\theta$ and $\psi_\theta$ are parameterized, not in the
role of the unit. A distribution-valued token lies outside the running instance
unless it is reduced to a vector in $\mathbb R^d$, for example a mean or a
finite coefficient summary.

When identity is unresolved,
under response sufficiency $Y\perp\!\!\!\perp\mathcal O\mid(U,X,C)$ and the
external-query condition $U\perp\!\!\!\perp(X,C)\mid\mathcal O$, the world-side target
and the learner-side token composition are
\begin{equation}
  \boxed{
  \begin{aligned}
  P^\star(dy\mid x,c,\mathcal O)
  &=
  \int_{\cU}
  P_u^\star(dy\mid x,c)\,
  P(du\mid\mathcal O),
  &&\text{world target},\\
  \widehat P_{\theta,\phi}(dy\mid\mathcal O;x,c)
  &:=
  \int_{\cZ}
  R_\theta(dy\mid x,c,z)\,
  T_\phi(dz\mid\mathcal O,c),
  &&\text{learner composition}.
  \end{aligned}}
  \label{eq:world-learner-response-coordinates}
\end{equation}
The first line is the exact world mixture under these conditions; it contains
no learned parameter. The
second line is a learner defined on token space. They coincide only if the
tokenizer realizes an appropriate token law for the relevant world-side unit
mixture and the shared response form correctly realizes the corresponding
unit-conditioned targets. A point token is a further learner-side collapse.
If a trusted key fixes $u(k)$, the world identity conditional is the Dirac law
$P(du\mid k)=\delta_{u(k)}$, while the learner separately reads the lookup token
$Z_{u(k)}^c$. The theorems below use an identity-mixture learner because it is
information-theoretically convenient; deployed learners may instead use the
token composition. These are alternative learner coordinates, not identities
that place learned parameters inside world truth.

What is learned is the pair $(T_\phi,R_\theta)$, not a collection of unrelated
unit-specific mechanisms. Heterogeneity is thereby organized in a common token
space:
\begin{equation}
  \mathcal O_u
  \xrightarrow{T_\phi}
  Z_u^c
  \xrightarrow{R_\theta(\,\cdot\,,x,c)}
  \widehat P_{\theta,\phi}(\,\cdot\mid\mathcal O_u;x,c).
  \label{eq:token-pipeline}
\end{equation}
The tokenizer is responsible for discovering structure that units can share; the
response mechanism specifies how any token, faced with a query and context,
produces a response. A good tokenizer converts complex unit-specific
regularities into a simple, stable, shared law on token space.

The shared-form class also contains a degeneration in which the response never
receives the unit. In the linear running instance this is $\psi_\theta\equiv 0$,
or any token that does not enter $\eta_\theta$. That unit-omitting specification is
the computational restriction
\begin{equation}
  \widehat P_{\theta,\phi}(dy\mid\mathcal O_u;x,c)
  =
  \widehat P_\theta(dy\mid x,c)
  \qquad\text{for all }u,
  \label{eq:unit-insensitive-learner}
\end{equation}
which has two canonical sufficient realizations under
\cref{hyp:shared-form-token}:
\begin{equation}
  Z_u^c=z_0,
  \label{eq:constant-token}
\end{equation}
or
\begin{equation}
  R_\theta(dy\mid x,c,z)
  =
  R_\theta(dy\mid x,c).
  \label{eq:token-ignored}
\end{equation}
These equations define the unit-insensitive learner specification. The same
specification can be used with a homogeneous world or with the marginal of a
heterogeneous family; world homogeneity is a property of $\{P_u^\star\}$, not
of token equality.

The tokenizer and the response learner are trained jointly: each shapes whether
the other remains simple enough to generalize.

When a trusted identity is available, a token may be obtained by direct lookup
at the resolved address. When no resolver supplies the realized unit, a token of
the same type is formed from factual evidence by unit abduction
(\cref{sec:unit-access}). Both access modes still train the same pair
$(T_\phi,R_\theta)$.

The token space also determines how observations from other units may constrain
the response law for a target unit. Referential identity answers which
individual is present; sharing observations across units requires an explicit
similarity or borrowing rule, such as a metric, kernel, graph, or hierarchy.

A learnable specification names what is shared, what varies with the unit,
what remains invariant, and how evaluation distinguishes a unit-dependent
relationship from identifier memorization
(\cref{app:evaluation-checklist}).
\Cref{hyp:shared-form-token} answers the first three by requiring a shared
$R_\theta$ and a contextual token; the structured class further requires that
$R_\theta$ depend on the token only through a declared simple relation, with
\cref{def:linear-shared-form} as the running instance.

\paragraph{Structure as the condition of learnability.}
\label{subsec:structure-learnability}
The pair $(T_\phi,R_\theta)$ is the computational expression of a dual
restriction. The response map $x\mapsto P_u^\star$ needs structure even for a
fixed unit, and the unit axis $u\mapsto P_u^\star$ needs a second restriction so
that observations can be shared and an unseen unit can be served. Homogeneity is
one such structure. A simple shared relation in the token supplies another:
units may differ, but only through a low-complexity map of $Z_u^c$. The linear
running instance uses the inner product in \cref{eq:linear-predictor}; other
declared simple maps of the same token remain open. Which other units'
observations may constrain the token is a separate borrowing question. The roles
of a which-unit belief and a trusted lookup are clarified in
\cref{app:interface-distinctions}.

\begin{figure}[!htbp]
  \centering
  \input{figures/fig-response-law-wedge}
  \caption{\textbf{The computational assumption for the supervised
  specialization.}
  (a)~A unit-insensitive learner: every unit receives a common token $z_0$,
  or the shared form ignores the token.
  (b)~Structured heterogeneity: a tokenizer forms contextual unit tokens, and
  all units share one response-law form $R_\theta$, a simple relation in the
  token (linear running instance).
  (c)~A saturated class assigns an unrelated law to each unit.}
  \label{fig:response-law-wedge}
\end{figure}
\FloatBarrier

\section{Learner Access to the Unit}
\label{sec:unit-access}

With the unit boundary, response family, and shared form $(T_\phi,R_\theta)$
fixed (\cref{subsec:structure-learnability}), world-side attribution
determines which realized unit the event concerns. Learner access determines
how that unit enters downstream learning as a contextual token. It does not
replace the structure that made the family learnable. We distinguish direct
unit access from unit abduction; both cases retain the same persistent referent
and return tokens of the same type. They separate two questions: which unit does
the current evidence concern, and, given a token for that unit, how does the
shared form respond to the supplied query? Direct access resolves the first
question without eliminating the second.

The first question is referential. A further structural-localization question
asks where the resolved or inferred referent lies in the token space used for
sharing information. A stable ID may answer the referential question without
supplying that position, just as a learned token geometry need not uniquely
identify the individual.

\paragraph{Direct unit access.}
A trusted resolver uses a key $k$ to determine which persistent unit the event
concerns and which stable address should be read. A separate lookup at that
address supplies the token $Z_{u(k)}^c$. With an ID-indexed model this is the
learned row
$z_\theta(k)$---an embedding, preference factor, or random effect---updated from
the records associated with that key. Personalized recommendation with a trusted user
key is a common example. Direct access therefore removes which-unit uncertainty
while leaving token-and-form approximation error. A history-dependent response
state may also be updated for the known unit, but such state belongs to
the tokenizer's context or to $R_\theta$ rather than to unit attribution. A
Dirac measure can record the resolved identity in common measure-valued
notation without changing the token's origin or turning lookup into
abduction; details are recorded in the appendix.

\paragraph{Unit abduction.}
Unit abduction applies when no resolver identifies the realized persistent
unit. Let $\mathcal O$ denote the factual evidence available before the
current answer. The access step forms a contextual unit token
\begin{equation}
  Z^c
  =
  T_\phi(\mathcal O,c)
  \label{eq:unit-abduction}
\end{equation}
of the same type as a lookup token. Alternative queries then use that token
through $R_\theta$. This yields the learner's token-space composition in
\cref{eq:world-learner-response-coordinates}.

The theoretically exact predictive law remains the mixture over units. Under
the data-generating distribution,
$P(U\in du\mid\mathcal O)$ is the world-side identity posterior: conditioning
changes uncertainty about which unit was realized, while the referent itself
remains fixed. The theorems below use an identity-mixture specialization and write
the learner's approximation as
\begin{equation}
  Q_\phi(du\mid\mathcal O).
  \label{eq:which-unit-belief}
\end{equation}
When the learning protocol explicitly targets the identity posterior, $Q_\phi$
may be interpreted and evaluated as an approximate posterior; otherwise its
calibration and recovery properties require separate targets and diagnostics.
A tokenizer may emit a token, or a law over tokens, without first forming an
identity posterior; \cref{eq:token-composition} gives the corresponding
prediction.
\paragraph{Unit information rather than a mandated belief.}
\label{subsec:unit-info-object}
The formed token $Z^c$ is not the unit and is not required to be a probability
over $\cU$. Direct access and abduction differ by how the token is obtained,
not by whether that information is a belief.
Conditioning on an unobserved quantity and averaging
predictions over its uncertainty are standard probabilistic-learning operations
\citep{bishop2006pattern,murphy2012machine}. The contribution claimed here is the
persistent-referent formulation and the separation of learner-access modes, not
those operations themselves. Established
causal and logical uses of \emph{abduction} are compared in \cref{sec:related}.

The unit boundary is declared before learning and evaluation. Within a declared
same-unit query family, the same realized value $u$ is retained across response
queries. A new factual observation may update the formed token; alternative
queries in the same family reuse the current token.

\begin{figure}[t]
  \centering
  \input{figures/fig2-belief-queries}
  \caption{\textbf{Direct unit access and unit abduction return tokens of the
same type.}
  A trusted resolver uses a key $k$ to fix the persistent referent and its
  stable address; lookup then supplies $Z_{u(k)}^c$, typically the learned row
  $z_\theta(k)$, without
  forming a which-unit belief. When no resolver
  identifies the unit, a tokenizer forms a contextual token $Z^c$ from
  factual evidence. A which-unit belief $Q_\phi$ may approximate the world-side
  identity conditional (\cref{app:interface-distinctions}). Both access modes
  use the same learned response model, with the token mode using the shared form
  $R_\theta$.}
  \label{fig:belief-queries}
\end{figure}

In the supervised specialization, an identity-mixture learner mirrors the
world-side unit mixture in \cref{eq:world-learner-response-coordinates} by
reusing one learned identity mixing measure through an identity-indexed response
model:
\begin{equation}
  \boxed{
  \widehat P_{\theta,\phi}(dy\mid\mathcal O;\,x^Q)
  :=\int_{\cU}P_\theta(dy\mid x^Q,u)
                  Q_\phi(du\mid\mathcal O).}
  \label{eq:core-composition}
\end{equation}
This identity-mixture form is used in the theorems below. A different learner
realization is the token-space form
\begin{equation}
  \widehat P_{\theta,\phi}(dy\mid\mathcal O;\,x^Q,c)
  :=
  \int
  R_\theta(dy\mid x^Q,c,z)\,
  T_\phi(dz\mid\mathcal O,c).
  \label{eq:token-composition}
\end{equation}
The two integrals coincide with each other and with the exact world mixture
only under the conditions of
\cref{eq:world-learner-response-coordinates}. The
semicolon distinguishes the supplied response query $x^Q$ from abductive
evidence. The displayed learner prediction is always well-defined when its
components are. To identify the corresponding exact fixed-belief mixture with
$P(Y^Q\in dy\mid\mathcal O,x^Q)$, we use two separate conditions:
response sufficiency,
$Y^Q\perp\!\!\!\perp\mathcal O\mid(U,x^Q)$, and the external-query condition,
$U\perp\!\!\!\perp X^Q\mid\mathcal O$ (or the same statement for the complete
input $(X^Q,C^Q)$ when context is exposed). The latter holds for a fixed query
and, more generally, when a query policy uses the declared factual evidence but
has no hidden access to $U$. If a recorded variable is informative about the unit,
the protocol places that factual information in $\mathcal O$ before forming
the token; an alternative response query does not silently become a new
abduction call. When answer-time conditions must be distinguished from the
focal query, we write the complete response input as $(x^Q,c^Q)$ and call
$c^Q$ \emph{response context}. It may otherwise be absorbed into a larger
$x^Q$.
Attribution evidence remains in $\mathcal O$, which excludes the current
prediction target.

\paragraph{The uninformative limit.}
If $U\perp\!\!\!\perp\mathcal O$, then
$P(U\in du\mid\mathcal O)=\Pi(du)$ almost surely.
Evidence-based individualization is then unavailable: a learner calibrated to
this target remains at population-level uncertainty, or an operational system
may abstain and seek new evidence. This limitation concerns attribution
information and is compatible with either homogeneous or heterogeneous
responses across units.

\paragraph{Evidence informativeness and response heterogeneity.}
Evidence informativeness asks whether the available evidence changes what can
be known about $U$. Response heterogeneity asks whether changing $U$ changes
the current response law. They define separate axes of the problem.
\begin{table}[t]
\centering
\caption{For a unit boundary fixed by the task, learner access to the unit and
unit-response dependence are independent axes. ``Current evidence'' means the
evidence declared by the protocol.}
\label{tab:attribution-response-axes}
\small
\begin{tabular}{@{}p{0.22\linewidth}p{0.32\linewidth}p{0.36\linewidth}@{}}
\toprule
Access to $U$ & unit-homogeneous response & unit-heterogeneous response \\
\midrule
resolved or informative & The referent may be resolved directly, or the
formed token may be informative; this response target is invariant across
units. &
Personalization is substantive; a resolved identity or an informative token
changes the prediction. \\
unresolved; uninformative & Unit uncertainty remains at the
population level; the response is unit-invariant. & Oracle predictive value of
knowing $U$ may exist, but
the current evidence cannot access it; mix, abstain, or collect new evidence. \\
\bottomrule
\end{tabular}
\end{table}
Recommendation with a trusted resolver occupies the resolved-identity,
heterogeneous cell; the lookup token \(z_\theta(k)\) may nevertheless remain
learned.
The lower-right cell separates the predictive value of unit information from
the accessibility of attribution evidence; the upper-left cell separates attribution from
relevance to the response target. The regular-conditional and support qualifications
are recorded in
\cref{app:two-attribution-response-axes}.

\section{Unit Information: Predictive Value, Access, and Deployment}
\label{sec:consequences}

In this section, ``unit information'' refers to the predictive value of knowing
the realized unit, decomposed into oracle value, accessibility, learned
realization, and component identifiability. The proofs use the identity-mixture
learner of \cref{eq:core-composition}.

The central theoretical question is not merely whether a unit-conditioned
model can be written. Under \cref{hyp:shared-form-token} the learned object is
the pair $(T_\phi,R_\theta)$. The question is which part of the oracle
predictive value of unit information is available through declared evidence,
how much of that available value the pair attains, and what the resulting
marginal prediction can certify about its internal token-and-form
decomposition. The corresponding layers are distinct:
\begin{equation}
  \boxed{
  \text{oracle value}
  \neq \text{predictive accessibility}
  \neq \text{learned-pair realization}
  \neq \text{component identifiability}.}
  \label{eq:four-theory-layers}
\end{equation}
Here \emph{learnability} is reserved for whether some learning rule drives the
approximation penalty to zero as training information grows. The identities
below are classical specializations of conditional entropy, mutual information,
the relative-entropy chain rule, and data processing
\citep{cover2005elements,gneiting2007strictly}. They organize the four layers;
learnability and component identification require the additional conditions
listed in the appendix.

\subsection{Oracle Predictive Value, Accessibility, and Residual Information}

\paragraph{Question.}
If oracle unit information is useful, how much of that value is available from
the factual evidence that the protocol actually exposes? Write $W$ for the
complete declared response input, with $W=X^Q$ in the base response specification and
$W=(X^Q,C^Q)$ when optional response context is exposed. The letter $W$ is not
a unit token $Z_u^c$. Consider a declared
deployment law for $(U,\mathcal O,W,Y^Q)$ and assume response sufficiency,
\begin{equation}
  Y^Q\perp\!\!\!\perp\mathcal O\mid(U,W).
  \label{eq:value-access-response-sufficiency}
\end{equation}
When the relevant conditional densities exist with respect to a common outcome
reference measure and the log-loss risks are finite, write
\begin{align}
  R_W^\star&:=E[-\log p(Y^Q\mid W)],
  &R_{\mathcal O,W}^\star&:=E[-\log p(Y^Q\mid\mathcal O,W)],
  &R_{U,W}^\star&:=E[-\log p(Y^Q\mid U,W)].
  \label{eq:value-access-bayes-risks}
\end{align}

\begin{proposition}[Value--access decomposition under log loss]
\label{prop:value-access-decomposition}
Under \cref{eq:value-access-response-sufficiency},
\begin{equation}
  R_{U,W}^\star\leq R_{\mathcal O,W}^\star\leq R_W^\star,
  \label{eq:value-access-risk-ladder}
\end{equation}
and the oracle value decomposes exactly as
\begin{align}
  \underbrace{R_W^\star-R_{U,W}^\star}_{V_{\rm oracle}}
  &=
  \underbrace{R_W^\star-R_{\mathcal O,W}^\star}_{V_{\rm accessible}}
  +
  \underbrace{R_{\mathcal O,W}^\star-R_{U,W}^\star}_{V_{\rm residual}}
  \notag\\
  &=I(U;Y^Q\mid W)
  \label{eq:main-oracle-unit-value}\\
  &=I(\mathcal O;Y^Q\mid W)
    +I(U;Y^Q\mid\mathcal O,W).
  \label{eq:value-access-mi-decomposition}
\end{align}
Moreover,
\begin{equation}
  I(\mathcal O;Y^Q\mid W)
  \leq
  \min\!\left\{I(U;\mathcal O\mid W),I(U;Y^Q\mid W)\right\}.
  \label{eq:value-access-data-processing}
\end{equation}
\end{proposition}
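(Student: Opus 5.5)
The plan is to express each risk difference as a conditional mutual information, then combine the chain rule for mutual information with response sufficiency. Differences of log-loss Bayes risks are conditional informations whenever the densities exist and the risks are finite. So, under the standing density and finiteness assumptions, we first record
\[
R_W^\star-R_{U,W}^\star = E\!\left[\log\frac{p(Y^Q\mid U,W)}{p(Y^Q\mid W)}\right]=I(U;Y^Q\mid W),
\]
and likewise $R_W^\star-R_{\mathcal O,W}^\star=I(\mathcal O;Y^Q\mid W)$. The first identity is exactly \cref{eq:main-oracle-unit-value}. The same step gives $R_{\mathcal O,W}^\star-R_{U,\mathcal O,W}^\star=I(U;Y^Q\mid\mathcal O,W)$, where $R_{U,\mathcal O,W}^\star:=E[-\log p(Y^Q\mid U,\mathcal O,W)]$.

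Response sufficiency \cref{eq:value-access-response-sufficiency} gives $p(Y^Q\mid U,\mathcal O,W)=p(Y^Q\mid U,W)$ almost surely. Hence $R_{U,\mathcal O,W}^\star=R_{U,W}^\star$, and the last difference above is $V_{\rm residual}=I(U;Y^Q\mid\mathcal O,W)$. Adding this to $V_{\rm accessible}$ telescopes to $V_{\rm oracle}$, which yields \cref{eq:value-access-mi-decomposition}. Since both conditional mutual informations are nonnegative, the risk ladder \cref{eq:value-access-risk-ladder} follows immediately.

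For \cref{eq:value-access-data-processing}, the bound by $I(U;Y^Q\mid W)$ follows from the decomposition just proved, because the residual term is nonnegative. The bound by $I(U;\mathcal O\mid W)$ comes from conditional data processing along the Markov chain $\mathcal O - (U,W) - Y^Q$. We expand $I(\mathcal O;U,Y^Q\mid W)$ by the chain rule in two orders:
\[
I(\mathcal O;U\mid W)+I(\mathcal O;Y^Q\mid U,W)=I(\mathcal O;Y^Q\mid W)+I(\mathcal O;U\mid Y^Q,W).
\]
Response sufficiency gives $I(\mathcal O;Y^Q\mid U,W)=0$, and the last term on the right is nonnegative, so $I(\mathcal O;Y^Q\mid W)\le I(U;\mathcal O\mid W)$.

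The main obstacle is measure-theoretic rather than algebraic. We must make sure that each conditional mutual information is finite and equals the corresponding risk difference when $U$ and $\mathcal O$ take values in general spaces. The general-space step is to define conditional mutual information as an expected relative entropy between regular conditional laws, and then use the relative-entropy chain rule. That chain rule is valid for nonnegative extended reals, and finiteness of $R_W^\star$ and $R_{U,W}^\star$ makes every term finite, so subtraction is legitimate. We also need to check that the common-reference densities give versions of the conditional laws compatible with these disintegrations. That holds almost surely once regular conditional distributions exist, for instance on standard Borel spaces; the appendix qualifications cover this.
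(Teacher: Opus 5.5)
Your proof is correct and follows essentially the same route as the paper. Both arguments convert the risk differences into conditional mutual informations and use response sufficiency to eliminate $I(\mathcal O;Y^Q\mid U,W)$; you do this by identifying $R_{U,\mathcal O,W}^\star$ with $R_{U,W}^\star$, which is equivalent to the paper's two-way chain-rule expansion of $I(U,\mathcal O;Y^Q\mid W)$. The only difference is that you derive the bound $I(\mathcal O;Y^Q\mid W)\le I(U;\mathcal O\mid W)$ explicitly from the chain rule for $I(\mathcal O;U,Y^Q\mid W)$ instead of citing conditional data processing, which makes that step self-contained.
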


\paragraph{Intuition.}
Perfect knowledge of $U$ defines an oracle ceiling, but evidence can expose only
the response-relevant part of that information. The remainder is still
predictively valuable in principle and inaccessible under the declared evidence
cutoff.

The three gaps vanish exactly under $Y^Q\perp\!\!\!\perp U\mid W$,
$Y^Q\perp\!\!\!\perp\mathcal O\mid W$, and
$Y^Q\perp\!\!\!\perp U\mid(\mathcal O,W)$. The last says that the evidence
retains all response-relevant unit information, not that it recovers the
realized unit. A trusted key that uniquely resolves $U=u(k)$ zeros the residual
term; estimating $z_\theta(k)$ and $R_\theta$ remains the approximation problem
below. The proper-score comparison is
\cref{prop:proper-score-heterogeneity-gap}; proofs are in
\cref{app:value-access-proofs}.

\paragraph{Binary example.}
Let $W$ be fixed, $U\sim\operatorname{Bernoulli}(1/2)$, $Y^Q=U$, and
$\mathcal O=U\oplus N$ with
$N\sim\operatorname{Bernoulli}(\delta)$ independent and
$0\leq\delta\leq1/2$. In nats, the oracle value is $\log 2$, the accessible
value is $\log 2-h(\delta)$, and the residual value is $h(\delta)$, where
$h(\delta)=-\delta\log\delta-(1-\delta)\log(1-\delta)$. The unit matters for
every $\delta$, but the declared evidence ranges from perfectly informative to
useless.

\subsection{Realization by the Learned Pair and End-to-End Error}

\paragraph{Question.}
How do unit-belief and fixed-unit response errors limit the value realized by
the deployed mixture? For the fixed-belief formulation, additionally assume the
external-query condition
\begin{equation}
  U\perp\!\!\!\perp W\mid\mathcal O.
  \label{eq:external-query-contract}
\end{equation}
This holds for a fixed query and for a query policy that uses
$\mathcal O$ without hidden access to $U$. At admissible $(o,w)$, write
\begin{align}
  P_o(du)&:=P(U\in du\mid\mathcal O=o),
  &Q_o(du)&:=Q_\phi(du\mid o),\notag\\
  K_{u,w}(dy)&:=P(Y^Q\in dy\mid U=u,W=w),
  &\widehat K_{u,w}(dy)&:=P_\theta(dy\mid w,u),
  \label{eq:query-local-kernels}\\
  M_{o,w}(dy)&:=\int K_{u,w}(dy)P_o(du),
  &\widehat M_{o,w}(dy)&:=\int\widehat K_{u,w}(dy)Q_o(du).
  \label{eq:true-learned-response-mixtures}
\end{align}
Under \cref{eq:value-access-response-sufficiency,eq:external-query-contract},
$M_{o,w}=P(Y^Q\in\cdot\mid\mathcal O=o,W=w)$ on the chosen versions.

\begin{proposition}[End-to-end predictive excess and component-error bound]
\label{prop:end-to-end-approximation}
Let $R_{\rm dep}(\theta,\phi)$ be the expected log loss of
$\widehat M_{\mathcal O,W}$. Whenever the displayed quantities are finite,
\begin{align}
  R_{\rm dep}(\theta,\phi)-R_{\mathcal O,W}^\star
  &=E\!\left[D_{\rm KL}\!\left(M_{\mathcal O,W}
       \,\middle\|\,\widehat M_{\mathcal O,W}\right)\right]
   =:\mathcal E_{\rm pred}(\theta,\phi),
  \label{eq:learner-predictive-kl}\\
  R_W^\star-R_{\rm dep}(\theta,\phi)
  &=I(\mathcal O;Y^Q\mid W)-\mathcal E_{\rm pred}(\theta,\phi),
  \label{eq:learner-achieved-value}\\
  R_{\rm dep}(\theta,\phi)-R_{U,W}^\star
  &=I(U;Y^Q\mid\mathcal O,W)+\mathcal E_{\rm pred}(\theta,\phi).
  \label{eq:learner-oracle-decomposition}
\end{align}
If, for almost every $(o,w)$ under the deployment law of
$(\mathcal O,W)$, $P_o\ll Q_o$ and
$K_{u,w}\ll\widehat K_{u,w}$ for $P_o$-almost every $u$, then
\begin{align}
  \mathcal E_{\rm pred}(\theta,\phi)
  &\leq E_{\mathcal O}\!\left[
    D_{\rm KL}\!\left(P(U\in\cdot\mid\mathcal O)
    \,\middle\|\,Q_\phi(\cdot\mid\mathcal O)\right)\right]
  \notag\\
  &\quad+E_{\mathcal O,W}\!\left[
    \int D_{\rm KL}\!\left(K_{u,W}\,\middle\|\,\widehat K_{u,W}\right)
    P(du\mid\mathcal O)\right].
  \label{eq:latent-joint-kl-deployment}
\end{align}
The bound remains valid in the extended sense when its right-hand side is
infinite.
\end{proposition}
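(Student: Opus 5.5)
The plan is to prove the three identities directly from the definition of log-loss risk, using the identification $M_{o,w}=P(Y^Q\in\cdot\mid\mathcal O=o,W=w)$ recorded just before the proposition. That identification rests on \cref{eq:value-access-response-sufficiency,eq:external-query-contract}. The inequality \cref{eq:latent-joint-kl-deployment} then comes from a joint-KL construction followed by data processing (marginalization).

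First, for \cref{eq:learner-predictive-kl}, I write $R_{\rm dep}=E[-\log \widehat m_{\mathcal O,W}(Y^Q)]$ and $R^\star_{\mathcal O,W}=E[-\log m_{\mathcal O,W}(Y^Q)]$, with densities taken against the common outcome reference measure. Finiteness of both risks allows subtraction. Conditioning on $(\mathcal O,W)$ and using that $Y^Q\mid(\mathcal O,W)\sim M_{\mathcal O,W}$, the difference is $E[\int \log(m/\widehat m)\,dM]=E[D_{\rm KL}(M\|\widehat M)]$. If $M\not\ll\widehat M$ on a set of positive probability, both sides are $+\infty$.

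The identities \cref{eq:learner-achieved-value,eq:learner-oracle-decomposition} are bookkeeping. I add and subtract $R^\star_{\mathcal O,W}$ and invoke \cref{prop:value-access-decomposition}, which gives $R_W^\star-R_{\mathcal O,W}^\star=I(\mathcal O;Y^Q\mid W)$ and $R_{\mathcal O,W}^\star-R_{U,W}^\star=I(U;Y^Q\mid\mathcal O,W)$. Finiteness is needed here so that no $\infty-\infty$ arises.

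For the bound, I fix an admissible $(o,w)$ and define two joint laws on $\cU\times\mathcal Y$:
\begin{equation*}
  J_{o,w}(du,dy)=P_o(du)K_{u,w}(dy),
  \qquad
  \widehat J_{o,w}(du,dy)=Q_o(du)\widehat K_{u,w}(dy).
\end{equation*}
Their $y$-marginals are $M_{o,w}$ and $\widehat M_{o,w}$. By the data-processing inequality for the projection $(u,y)\mapsto y$,
\begin{equation*}
  D_{\rm KL}(M_{o,w}\|\widehat M_{o,w})\le D_{\rm KL}(J_{o,w}\|\widehat J_{o,w}).
\end{equation*}
The chain rule for relative entropy then gives
\begin{equation*}
  D_{\rm KL}(J_{o,w}\|\widehat J_{o,w})
  = D_{\rm KL}(P_o\|Q_o)+\int D_{\rm KL}(K_{u,w}\|\widehat K_{u,w})\,P_o(du).
\end{equation*}
The absolute-continuity hypotheses guarantee $J\ll\widehat J$, with density $\frac{dP_o}{dQ_o}(u)\frac{dK_{u,w}}{d\widehat K_{u,w}}(y)$.

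Taking expectations over $(\mathcal O,W)$ comes next. The first term does not depend on $w$, so its expectation reduces to $E_{\mathcal O}[D_{\rm KL}(P(U\in\cdot\mid\mathcal O)\|Q_\phi(\cdot\mid\mathcal O))]$. In the second term I must replace $P_o(du)$ by $P(du\mid\mathcal O)$ while integrating against the joint law of $(\mathcal O,W)$. This is exactly where \cref{eq:external-query-contract} enters: $U\perp\!\!\!\perp W\mid\mathcal O$ means the identity posterior is not altered by further conditioning on $W$. Every term is nonnegative, so the inequality holds in the extended sense when the right-hand side is infinite.

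I expect the main obstacle to be measure-theoretic rather than conceptual. The chain rule and data processing must hold for general (non-discrete) $\cU$, and $(u,w)\mapsto D_{\rm KL}(K_{u,w}\|\widehat K_{u,w})$ must be jointly measurable so that the integrals make sense. I would handle this by assuming $K$ and $\widehat K$ are jointly measurable kernels with a jointly measurable density ratio, for example via the Donsker--Varadhan or Gelfand--Yaglom--Perez partition characterization of KL. Measurability of the KL map then follows as a supremum over a countable generating family, and the chain rule follows from Fubini applied to the product density. I would defer these verifications to the appendix alongside the proofs for \cref{prop:value-access-decomposition}.
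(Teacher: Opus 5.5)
Your proposal is correct and follows the paper's proof essentially step for step. The first identity comes from the conditional log-score difference, using the identification of $M_{o,w}$; the other two follow by bookkeeping with \cref{prop:value-access-decomposition}; and the bound uses the latent-joint laws $J_{o,w},\widehat J_{o,w}$ with the relative-entropy chain rule and data processing under $(u,y)\mapsto y$, followed by averaging. One small correction: the external-query condition is not needed in the final averaging step. $P_o$ is by definition $P(U\in\cdot\mid\mathcal O=o)$, and the right-hand side already integrates against $P(du\mid\mathcal O)$, so there is nothing to replace; that condition enters only through the identification $M_{o,w}=P(Y^Q\in\cdot\mid\mathcal O=o,W=w)$ used in the first identity.
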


The exact identities say that the learned pair realizes accessible
response value minus its predictive excess. The upper bound follows by applying
the relative-entropy chain rule to the true and learned latent-joint laws and
then projecting away $U$; the proof is in
\cref{app:end-to-end-approximation-proof}.

For a complementary query-local statement, let
$d_{\rm TV}(\mu,\nu)=\sup_A|\mu(A)-\nu(A)|$ and let
$\mathcal U_o$ be a common full-mass region under $P_o$ and $Q_o$.

\begin{lemma}[Query-local unit-mixture error propagation]
\label{prop:mixture-stability}
For measurable response kernels for which the displayed quantities are
well-defined,
\begin{align}
  d_{\rm TV}\!\left(\widehat M_{o,w},M_{o,w}\right)
  &\leq
  \int d_{\rm TV}\!\left(\widehat K_{u,w},K_{u,w}\right)Q_o(du)
  \notag\\
  &\quad+
  \left[\sup_{u,v\in\mathcal U_o}
  d_{\rm TV}\!\left(K_{u,w},K_{v,w}\right)\right]
  d_{\rm TV}(Q_o,P_o).
  \label{eq:mixture-stability}
\end{align}
\end{lemma}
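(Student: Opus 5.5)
The plan is a triangle inequality through an intermediate mixture that uses the true kernels but the learned belief. Define
\[
  \widetilde M_{o,w}(dy):=\int K_{u,w}(dy)\,Q_o(du),
\]
and write $d_{\rm TV}(\widehat M_{o,w},M_{o,w})\le d_{\rm TV}(\widehat M_{o,w},\widetilde M_{o,w})+d_{\rm TV}(\widetilde M_{o,w},M_{o,w})$. The first term isolates fixed-unit response error under the learned belief; the second isolates belief error under the true kernels. These are the two summands in \cref{eq:mixture-stability}.

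\textbf{Kernel-error term.} For any measurable $A$,
\[
  |\widehat M_{o,w}(A)-\widetilde M_{o,w}(A)|=\Bigl|\int(\widehat K_{u,w}(A)-K_{u,w}(A))\,Q_o(du)\Bigr|\le\int d_{\rm TV}(\widehat K_{u,w},K_{u,w})\,Q_o(du).
\]
Taking the supremum over $A$ gives the first summand. The only care needed is that $u\mapsto d_{\rm TV}(\widehat K_{u,w},K_{u,w})$ be measurable. This is covered by the hypothesis that the displayed quantities are well-defined. On a countably generated outcome space one can also take the supremum over a countable generating algebra.

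\textbf{Belief-error term.} Fix $A$ and set $g(u):=K_{u,w}(A)\in[0,1]$. Then
\[
  \widetilde M_{o,w}(A)-M_{o,w}(A)=\int g\,d(Q_o-P_o).
\]
Both measures give full mass to $\mathcal U_o$, so the signed measure $Q_o-P_o$ has total mass zero and lives on $\mathcal U_o$. Hence $g$ may be replaced by $g-c$ for any constant $c$. Let $m:=\inf_{\mathcal U_o} g$, $M:=\sup_{\mathcal U_o} g$, and choose $c=(m+M)/2$. This gives
\[
  \Bigl|\int g\,d(Q_o-P_o)\Bigr|\le\tfrac{M-m}{2}\,\|Q_o-P_o\|_{\rm TV,\,total}=(M-m)\,d_{\rm TV}(Q_o,P_o),
\]
using that total variation norm equals twice the sup-over-sets distance. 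Alternatively, use the Hahn decomposition: $\int g\,d(Q_o-P_o)\le M\,(Q_o-P_o)^+(\mathcal U_o)-m\,(Q_o-P_o)^-(\mathcal U_o)=(M-m)\,d_{\rm TV}$, since both parts have mass $d_{\rm TV}$. Finally, $M-m=\sup_{u,v\in\mathcal U_o}(K_{u,w}(A)-K_{v,w}(A))\le\sup_{u,v\in\mathcal U_o}d_{\rm TV}(K_{u,w},K_{v,w})$. This bound is uniform in $A$, so taking the supremum over $A$ gives the second summand.

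\textbf{Main obstacle.} The only delicate step is the oscillation argument in the belief term, namely getting the sharp constant $1$ rather than $2$. That requires the centering trick, or equivalently the Hahn decomposition, together with the fact that the signed measure has total mass zero. This is exactly why the supremum can be restricted to the common full-mass region $\mathcal U_o$. The remaining measurability issues are absorbed by the lemma's well-definedness hypothesis.
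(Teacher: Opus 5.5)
Your proof is correct and follows the paper's argument. It uses the same triangle inequality through the intermediate mixture $\int K_{u,w}\,Q_o(du)$ and the same pointwise bound for the kernel-error term, and your Hahn-decomposition version of the belief term is the paper's Jordan decomposition $Q_o-P_o=t(\alpha-\beta)$ with $\alpha,\beta$ concentrated on $\mathcal U_o$ (the centering trick is an equivalent way to obtain the same oscillation bound with constant $1$).
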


\paragraph{Intuition.}
The KL result tracks the log-loss penalty of the deployed prediction; the
total-variation result shows when unit-belief error can matter at a particular
query. If all true fixed-unit response laws agree there, the heterogeneity
diameter is zero and belief error cannot change the response mixture.

The KL bound splits deployed excess into unit-belief error and
true-posterior-averaged response-law error
(\cref{eq:latent-joint-kl-deployment}); the total-variation lemma makes belief
sensitivity heterogeneity-dependent
(\cref{app:mixture-stability-proof}). Under trusted direct access,
$P_o=Q_o=\delta_{u(k)}$ with no $Q_\phi$ module, so remaining error is mismatch
between $\widehat K_{u(k),w}$ and $K_{u(k),w}$---token and form, not
attribution. Both bounds are one-sided: a perfect marginal identifies neither
component, support mismatch can make the KL bound infinite, and total variation
does not control unbounded log loss.

\paragraph{Binary example and TV sensitivity.}
In the binary example above, the exact evidence-conditional predictor
has $\mathcal E_{\rm pred}=0$ and realizes all accessible value. A learner that
ignores $\mathcal O$ and always reports the pooled
$\operatorname{Bernoulli}(1/2)$ law has
$\mathcal E_{\rm pred}=I(\mathcal O;Y^Q)$ and realizes none of it. Separately,
if $K_{0,w}=K_{1,w}$ at a query, even a belief that swaps all mass between the
two units leaves the response prediction unchanged; when the laws separate, the
heterogeneity multiplier in \cref{eq:mixture-stability} records the possible
sensitivity.

\subsection{Single-Row Impossibility and Repeated-Linkage Separation}

\paragraph{Question.}
Can perfect marginal prediction validate the internal unit structure? The law
of total probability gives
\begin{equation}
  \int_{\cU}P_u^\star(dy\mid x)P(U\in du\mid X=x)
  =P(Y\in dy\mid X=x)
  \quad\text{for $P_X$-almost every $x$}.
  \label{eq:exact-rowwise-collapse}
\end{equation}
The next result upgrades this identity to an observational-equivalence and
testing statement, then exhibits a protocol that separates the same worlds.

\begin{proposition}[Single-row indistinguishability and linked-pair separation]
\label{prop:single-row-collapse}
\begin{enumerate}[label=(\alph*),leftmargin=*,itemsep=3pt]
  \item The identity in \eqref{eq:exact-rowwise-collapse} holds for compatible
  conditional versions. Moreover, in an unrestricted response-kernel class,
  every sample-only law $R(dy\mid x)$ has, under every learner belief, the
  unit-constant representation
  \begin{equation}
    K_u^R(dy\mid x):=R(dy\mid x),
    \label{eq:unit-constant-representation}
  \end{equation}
  so marginal fit does not identify the two mixture components.

  \item For a statistical witness, fix one response input. In world
  $\mathsf H$, let $U\sim\operatorname{Bernoulli}(1/2)$ and
  \begin{equation}
    P_{\mathsf H}(Y=1\mid U=0)=\tfrac14,
    \qquad
    P_{\mathsf H}(Y=1\mid U=1)=\tfrac34.
    \label{eq:bernoulli-heterogeneous-world}
  \end{equation}
  In world $\mathsf P$, let
  $P_{\mathsf P}(Y=1\mid U=u)=1/2$ for both units. If each observation comes
  from a fresh independently drawn unit and neither its identity nor linkage
  is observed, then for every $n$,
  \begin{equation}
    \mathcal L_{\mathsf H}(Y_1,\ldots,Y_n)
    =\mathcal L_{\mathsf P}(Y_1,\ldots,Y_n)
    =\operatorname{Bernoulli}(1/2)^{\otimes n}.
    \label{eq:single-row-equal-laws}
  \end{equation}
  Consequently every possibly randomized test has
  $\alpha_n+\beta_n=1$ and
  \begin{equation}
    \inf_{\varphi_n}\max\{\alpha_n,\beta_n\}=\tfrac12.
    \label{eq:single-row-minimax-error}
  \end{equation}

  \item If instead two responses per unit are observed with trusted same-unit
  linkage and are conditionally independent given $U$, then
  \begin{equation}
    \operatorname{Cov}_{\mathsf H}(Y_1,Y_2)=\tfrac1{16},
    \qquad
    \operatorname{Cov}_{\mathsf P}(Y_1,Y_2)=0,
    \label{eq:linked-pair-covariance}
  \end{equation}
  or equivalently their agreement probabilities are $5/8$ and $1/2$.
  If $M$ linked pairs are sampled independently from independently drawn units,
  their empirical agreement rate therefore gives a consistent test as
  $M\to\infty$.

  \item More generally, under the declared linked-pair model
  $(Y_1,Y_2)\mid U=u\sim\operatorname{Bernoulli}(p_u)^{\otimes2}$, the
  observable pair law identifies
  \begin{equation}
    \operatorname{Var}(p_U)
    =P(Y_1=1,Y_2=1)-P(Y_1=1)^2
    =\operatorname{Cov}(Y_1,Y_2),
    \label{eq:linked-pair-variance-identification}
  \end{equation}
  but does not by itself identify the full mixing law or realized unit.
\end{enumerate}
\end{proposition}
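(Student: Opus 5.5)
I will prove the four parts in order, since each is an elementary computation once the observation model is pinned down.

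\emph{Part (a).} The collapse identity \eqref{eq:exact-rowwise-collapse} is the disintegration of the joint law of $(U,Y)$ given $X$. Take a regular version $P(U\in du\mid X=x)$ and versions $P_u^\star(dy\mid x)=P(Y\in dy\mid U=u,X=x)$ that are jointly measurable in $(u,x)$. Then for measurable $A\subseteq\cU$, $B$ in the outcome space and $C$ in the input space, the tower property gives
\[
P(U\in A,\,Y\in B,\,X\in C)=E\bigl[\mathbf 1_C(X)\textstyle\int_A P_u^\star(B\mid X)\,P(du\mid X)\bigr].
\]
Taking $A=\cU$ shows that the mixture is a version of $P(Y\in\cdot\mid X)$. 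Uniqueness of regular conditional laws up to $P_X$-null sets then yields \eqref{eq:exact-rowwise-collapse} (a countably generated outcome $\sigma$-algebra is used here). For the non-identification claim, I substitute $K_u^R:=R$ into the learner mixture $\int K_u^R(dy\mid x)\,Q(du)$. It returns $R(dy\mid x)$ for every probability measure $Q$, because $R$ does not depend on $u$. Hence any row-level law, including the marginal of a heterogeneous family, is matched exactly by a unit-constant representation under every belief, and marginal fit cannot separate the two.

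\emph{Part (b).} Under $\mathsf H$, a fresh unit gives $P(Y=1)=\tfrac12\cdot\tfrac14+\tfrac12\cdot\tfrac34=\tfrac12$. Fresh independent units together with unobserved linkage make $(U_i,Y_i)$ i.i.d., so the $Y_i$ are i.i.d.\ $\operatorname{Bernoulli}(1/2)$. Under $\mathsf P$ this holds trivially, which gives \eqref{eq:single-row-equal-laws}. For a randomized test $\varphi_n$ I have $\alpha_n=E_{\mathsf H}\varphi_n$ and $\beta_n=1-E_{\mathsf P}\varphi_n$ (or with the roles swapped). Equal laws give $\alpha_n+\beta_n=1$, so $\max\{\alpha_n,\beta_n\}\ge\tfrac12$, and the constant test $\varphi_n\equiv\tfrac12$ attains \eqref{eq:single-row-minimax-error}.

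\emph{Parts (c) and (d).} I will do (d) first and specialize. Conditional independence given $U$ gives $P(Y_1=1,Y_2=1)=E[p_U^2]$ and $P(Y_1=1)=E[p_U]$. Hence
\[
\operatorname{Cov}(Y_1,Y_2)=E[p_U^2]-(E[p_U])^2=\operatorname{Var}(p_U),
\]
which is \eqref{eq:linked-pair-variance-identification}. The pair law depends on the mixing law only through its first two moments, so two mixing laws with equal mean and variance (for instance a two-point law versus a three-point law) give the same pair law. This shows non-identification of the full mixing law, and the realized unit is plainly not recovered. For (c), $\operatorname{Var}(p_U)=\tfrac14\bigl(\tfrac34-\tfrac14\bigr)^2=\tfrac1{16}$ under $\mathsf H$ and $0$ under $\mathsf P$. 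The agreement probability is
\[
P(Y_1=Y_2)=1-2E[p_U]+2E[p_U^2]=\tfrac12+2\operatorname{Var}(p_U)\quad\text{when }E[p_U]=\tfrac12,
\]
which gives $\tfrac58$ and $\tfrac12$. The $M$ agreement indicators are i.i.d.\ Bernoulli. Thresholding their mean at $\tfrac9{16}$ therefore has both error probabilities tending to $0$, by the weak law of large numbers or by Hoeffding, which gives exponential decay.

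The only delicate step is the measure-theoretic bookkeeping in (a): choosing jointly measurable kernel versions and stating the ``compatible versions'' qualification precisely. The remaining parts are direct moment calculations.
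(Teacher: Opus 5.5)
Your proposal is correct and follows the paper's proof essentially step for step. For (a) you use the law of total probability, spelled out more carefully via the tower property and a countably generated outcome $\sigma$-algebra, plus the unit-constant substitution; for (b) equality of the i.i.d.\ $\operatorname{Bernoulli}(1/2)$ laws forces $\alpha_n+\beta_n=1$; and for (c)--(d) you use the moment identities $P(Y_1=1)=E[p_U]$ and $P(Y_1=1,Y_2=1)=E[p_U^2]$ with a $9/16$-threshold Hoeffding test. The only thing to make concrete is the non-identification witness in (d), e.g.\ the paper's pair $\tfrac12\delta_{1/4}+\tfrac12\delta_{3/4}$ and $\tfrac18\delta_0+\tfrac34\delta_{1/2}+\tfrac18\delta_1$, which share mean $1/2$ and second moment $5/16$ and hence induce the same pair law.
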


\paragraph{Intuition.}
One response from each fresh, unlinked unit reveals only the pooled Bernoulli
mean. Trusted repeated linkage exposes a joint observation, whose within-unit
dependence carries a heterogeneity signal.

Unrestricted single-row observations cannot distinguish a homogeneous world
from every heterogeneous alternative. Trusted same-unit pairs under a
fixed-$u$ product law separate the displayed worlds and identify
$\operatorname{Var}(p_U)$. Repeated records without trusted linkage do not
suffice; recommendation logs with user IDs are outside the fresh-unlinked
regime because the IDs already supply linkage and update addresses. Restricted
mixture identifiability can require further structure
\citep{teicher1963identifiability}. Proof and the explicit test are in
\cref{app:single-row-linked-proof}.

\paragraph{Binary example.}
The two fixed-unit success probabilities are $1/2\pm1/4$, so their population
variance is $(1/4)^2=1/16$, exactly the linked-pair covariance. Both probabilities
remain strictly between zero and one: the witness retains event variation after
$U$ is fixed rather than replacing it by a deterministic label.

\begin{figure}[t]
  \centering
  \input{figures/fig3-marginal-collapse}
  \caption{\textbf{Single-row marginals do not certify unit structure.}
  Distinct fixed-unit Bernoulli laws and a unit-independent Bernoulli law can
  both yield $P(Y=1)=1/2$ after marginalizing $U$. In the restricted witness of
  \cref{prop:single-row-collapse}, one response from each fresh unit remains
  indistinguishable for every sample size, whereas a conditionally independent
  pair with trusted same-unit linkage reveals within-unit covariance.}
  \label{fig:marginal-collapse}
\end{figure}

\subsection{Practical Corollaries for Repeated-Unit Evaluation}
\label{sec:framework-consequences}

When attribution is observed, retaining units also changes elementary
bookkeeping in ways that alter the estimand and evaluation protocol
\citep{hurlbert1984pseudoreplication,saeb2017need,roberts2017crossvalidation}.
For $M$ observed units, let $m(u)$ be the number of records from $u$ and let
$\bar\ell(u)$ be its mean loss.

\begin{corollary}[Row and unit weighting]
\label{prop:risk-weighting}
The empirical risks
\begin{equation}
  \widehat R_{\rm row}=\sum_u\frac{m(u)}{N}\bar\ell(u),
  \qquad
  \widehat R_{\rm unit}=\frac1M\sum_u\bar\ell(u)
  \label{eq:main-row-unit-risk}
\end{equation}
agree for every possible collection of unit mean losses if and only if all
observed units have equal record multiplicity.
\end{corollary}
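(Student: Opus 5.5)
The plan is to treat both empirical risks as linear functionals of the vector of unit mean losses $(\bar\ell(u))_u\in\mathbb R^M$ and compare their coefficient vectors. Write $N=\sum_u m(u)$ for the total number of records, with every observed unit having $m(u)\geq1$. Then
\[
\widehat R_{\rm row}-\widehat R_{\rm unit}
=\sum_u\Bigl(\frac{m(u)}{N}-\frac1M\Bigr)\bar\ell(u).
\]
The phrase ``agree for every possible collection of unit mean losses'' is read as: this linear form vanishes identically as $(\bar\ell(u))_u$ ranges over an admissible set. I take that set to contain all nonnegative vectors, or at least an open set. A linear form vanishes on such a set if and only if all of its coefficients are zero.

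For the sufficiency direction, suppose $m(u)=m$ for all $u$. Then $N=Mm$, so $m(u)/N=1/M$ for every unit. The two weightings coincide termwise, and the risks agree for every collection of losses.

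For the necessity direction, suppose the risks agree for every collection. Set $\bar\ell(v)=1$ for a single unit $v$ and $\bar\ell(u)=0$ for all other units. The risks become $m(v)/N$ and $1/M$, so $m(v)=N/M$ for every $v$, and the multiplicities are all equal.

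The only delicate step is checking that these indicator loss collections are admissible. I would state explicitly that the unit mean losses range over arbitrary values in $[0,\infty)$, or over any set whose span is $\mathbb R^M$. If the loss is bounded, for example in $[0,1]$, the indicator vectors are still admissible. If someone objects that individual record losses constrain the means, I would note that only the means enter the two formulas, so any choice of per-record losses with the specified means suffices. In particular, taking every record of unit $v$ to have loss $1$ and every other record loss $0$ realizes the indicator collection. Nothing earlier in the paper is needed beyond the definitions in \cref{eq:main-row-unit-risk}.
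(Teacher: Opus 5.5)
Your proof is correct and follows the paper's own argument: both risks are linear in the vector $(\bar\ell(u))_u$, so they agree for every such vector exactly when the coefficients $m(u)/N$ and $1/M$ coincide for every unit, which (since $N=\sum_u m(u)$) is the same as equal multiplicity. Your indicator-vector check and your remark on which loss collections are admissible spell out the coefficient-comparison step that the paper states in one line.
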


\begin{corollary}[Known units survive a record-wise split]
\label{prop:split}
If each record independently enters training with probability $p\in(0,1)$, a
unit with $m(u)$ records appears in both train and test with probability
\begin{equation}
  1-p^{m(u)}-(1-p)^{m(u)}.
  \label{eq:main-record-split-overlap}
\end{equation}
This probability is positive exactly when $m(u)\geq2$.
\end{corollary}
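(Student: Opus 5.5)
The probability that a unit with $m=m(u)$ records has records on both sides of the split is one minus the probability that all of them land on a single side. I will compute that probability from the independence of the record-level assignments and then check positivity separately for $m=1$ and $m\geq2$.

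\textbf{Formula.} Fix a unit $u$ with $m=m(u)\geq1$ records, and let $B_1,\dots,B_m$ be the independent Bernoulli$(p)$ indicators that its records enter training. The unit appears in both train and test exactly when the $B_j$ are not all equal. The event ``all records in training'' is $\{B_1=\dots=B_m=1\}$. By independence it has probability $p^{m}$. The event ``all records in test'' is $\{B_1=\dots=B_m=0\}$, with probability $(1-p)^{m}$. These two events are disjoint because $m\geq1$. Their complement is the both-sides event, so its probability is $1-p^{m}-(1-p)^{m}$, which is \cref{eq:main-record-split-overlap}.

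\textbf{Positivity.} If $m=1$, the expression equals $1-p-(1-p)=0$, as it must, since a single record cannot be on both sides. If $m\geq2$, I use $p,1-p\in(0,1)$. Then $p^{m}<p$ and $(1-p)^{m}<1-p$, so $p^{m}+(1-p)^{m}<p+(1-p)=1$ and the probability is strictly positive. Alternatively, the outcome $B_1=1,\ B_2=0$ already has positive probability $p(1-p)$ and lies in the event.

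\textbf{Degenerate cases and main obstacle.} Units with $m(u)=0$ do not count as observed, and the statement only concerns units with records. The argument is elementary, so the only real point of care is the disjointness of the two all-on-one-side events. Disjointness requires $m\geq1$; when $m=0$ both events hold vacuously and the formula would give $-1$. I will therefore state explicitly that $m(u)\geq1$ for observed units.
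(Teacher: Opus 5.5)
Your proposal is correct and follows the paper's proof: both remove from the sample space the two disjoint events that all $m(u)$ records go to training (probability $p^{m(u)}$) or all go to test (probability $(1-p)^{m(u)}$). Your explicit check of positivity, and your remark that disjointness needs $m(u)\geq1$ (which holds for observed units), supply details that the paper leaves implicit.
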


Neither weighting is universally correct; the deployment estimand determines
the target. Likewise, a record-wise split generally mixes new-event prediction
for known units with prediction for unseen units, so an unseen-unit claim
requires an explicitly unit-disjoint protocol. Proofs are in
\cref{app:repeated-unit-evaluation-proofs}. Prediction-level marginalization is
distinct from deleting attribution metadata: removing unit labels from a table
is a data map, not the same operation as mixing a unit-conditioned family.

\begin{figure}[t]
  \centering
  \input{figures/fig4-repeated-units}
  \caption{\textbf{Repeated units change both the estimand and the split
  question.}
  Row averaging and unit averaging assign different weights when units
  contribute unequal numbers of records. A record-wise split can measure a
  new event for a known unit, whereas a whole-unit split targets
  generalization to a new unit. The declared deployment question determines the
  appropriate choice.}
  \label{fig:repeated-units}
\end{figure}

An empirical study that uses the identity-mixture specialization should keep
separate unit-belief quality, fixed-unit response quality, deployed
marginalized prediction, and a matched unit-omitting baseline under the same
information and target.
A compact checklist is in \cref{app:evaluation-checklist}.
\section{Recommendation as a Worked Setting}
\label{sec:recommendation}

Recommendation provides a worked instance of the unit primitive: many
interaction records can share one user while candidate items change. Direct
unit access uses a trusted key to resolve which persistent user is active and
which stable address to read; lookup supplies \(Z_{u(k)}^c=z_\theta(k)\). Unit abduction forms a
token of the same type from factual evidence
(\cref{sec:unit-access}). The two access modes differ in how the token is obtained,
not in whether the user is the unit.
The mapping is summarized in
\cref{tab:recommendation-map}.
Factorization and neural collaborative-filtering models are established
instances of the ID-indexed formulation
\citep{mnih2007pmf,koren2009matrix,he2017neural}.
In the linear running instance they instantiate
\cref{eq:linear-predictor} by a user token \(z_\theta(k)\) and an item map
\(\psi_\theta(x^Q)\); a deeper query encoder may be nonlinear, but the readout
in the user token remains an inner product.
\begin{table}[t]
\centering
\caption{For this worked task, users instantiate the declared unit boundary.}
\label{tab:recommendation-map}
\small
\begin{tabular}{@{}p{0.30\linewidth}p{0.60\linewidth}@{}}
\toprule
Object in the unit formulation & Recommendation interpretation \\
\midrule
\multicolumn{2}{@{}l}{\textbf{Population}} \\
population unit $U$ & possible user individuals \\
\multicolumn{2}{@{}l}{\textbf{Individual referent}} \\
realized unit $u$ & one persistent user; not a uniquely identified embedding coordinate \\
\multicolumn{2}{@{}l}{\textbf{Direct-access model}} \\
 trusted key $k$ & stable referential address that associates interaction records and parameter updates with the same user slot \\
lookup token $Z_{u(k)}^c$ & learner-side object obtained by indexing, typically represented by a learned row $z_\theta(k)$ (embedding, factor, random effect); not the user $u(k)$ \\
\multicolumn{2}{@{}l}{\textbf{Learner under unit abduction}} \\
factual evidence $\mathcal O$ & admissible observed history or side information used to form a user token \\
tokenizer $T_\phi$ & maps evidence and context to a user token $Z^c$; when explicitly targeted, $Q_\phi$ approximates the world-side identity conditional for the identity-mixture learner \\
\multicolumn{2}{@{}l}{\textbf{Query--response specification}} \\
query $x^Q$ & candidate item or slate supplied to the response model \\
response context $c^Q$ & explicit answer-time context or history-dependent response state, when used; it may change the token while user identity stays fixed \\
shared form $R_\theta(dy\mid x^Q,c^Q,Z)$ & feedback law instantiated by the user token through a simple relation in $Z$; the linear running instance is $\eta=\alpha_\theta(x^Q,c^Q)+\langle\psi_\theta(x^Q,c^Q),Z\rangle$; a direct-ID model computes it at $Z_{u(k)}^c=z_\theta(k)$ \\
\bottomrule
\end{tabular}
\end{table}

\section{Relations to Established Learning Traditions}
\label{sec:related}

Classical statistical learning studies how a predictor class and learning rule
generalize from finite samples
\citep{hastie2009elements,shalevshwartz2014understanding}. The unit formulation
does not replace that theory. Under \cref{hyp:shared-form-token} it adds an
explicit question when persistent units matter: how a tokenizer organizes
units so that one shared response-law form generalizes across them.

The components of the pair $(T_\phi,R_\theta)$ have long appeared across
statistics and machine learning under different objects and assumptions.
Longitudinal random-effects models provide a particularly close comparison
because they combine supplied same-subject linkage, a shared response form, and
a subject-specific token (the random effect). Other traditions infer
attribution, organize prediction around episodes or environments rather than
individuals, or begin from a supplied group boundary. \Cref{tab:related-families}
compares six traditions by their persistent referent, attribution assumptions,
identity uncertainty, and how they instantiate a tokenizer and a shared form.
The table records a recurring structural role rather than mathematical
equivalence: each tradition retains its own estimands, identification
conditions, and learnability assumptions.

\begin{table}[t]
\centering
\caption{Relations to six established model families.}
\label{tab:related-families}
\footnotesize
\begin{tabular}{@{}>{\raggedright\arraybackslash}p{0.20\linewidth}
                    >{\raggedright\arraybackslash}p{0.35\linewidth}
                    >{\raggedright\arraybackslash}p{0.35\linewidth}@{}}
\toprule
Family & Established object or assumption & Relation to the unit formulation \\
\midrule
random effects & A supplied subject or group index and a population model for
subject-specific parameters \citep{laird1982random,gelman2006data}. & Direct
lookup of a subject token with a shared $R_\theta$; the random effect is a
distributional token, not a private response mechanism. \\
latent-variable models and mixtures of experts & Incomplete-data estimation,
input-dependent gating over experts, and amortized inference over latent codes
\citep{dempster1977maximum,jacobs1991adaptive,kingma2014autoencoding}. & The
encoder or gating map is a tokenizer; the decoder or expert family is a shared
form, provided the task assigns the latent the role of a persistent unit. \\
multitask, grouped-latent, Neural-Process, and meta-learning methods & Shared
parameters or representations across related tasks, and dataset-, group-, or
task-level latents or deterministic context representations derived from an
already associated context set
\citep{caruana1997multitask,baxter2000model,edwards2017neural,
bouchacourt2018mlvae,garnelo2018neural,
garnelo2018conditional,gordon2019versa}. & An episode or context encoder is a
tokenizer and the decoder is a shared form, provided the outer task declares a
persistent unit. In the cited methods the context-to-group association is
supplied before prediction, which differs from forming a token from
otherwise unattributed evidence. \\
record linkage, entity resolution, and probabilistic identity representations
& Posterior linkage structures, learned entity matching, and
distribution-valued instance representations
\citep{steorts2016entity,steorts2018generalized,kaplan2022practical,
li2020ditto,oh2019modeling,shi2019probabilistic}. & These methods infer which
records share a task-side unit. The present formulation additionally asks for
a learner-side token that instantiates one shared $R_\theta$. \\
recommenders and personalization & Known user indices, latent user factors,
history encoders, and user-conditioned response scores
\citep{mnih2007pmf,koren2009matrix,salakhutdinov2008bpmf,he2017neural,
kang2018sasrec,liang2018multvae}. & A known user index provides a lookup token
$z_\theta(k)$ while the scoring model is a shared $R_\theta$. Unresolved user
attribution forms a token of the same type rather than looking it up. \\
potential outcomes, structural causal models, and Abductive Learning & units
under treatment comparisons, counterfactual
abduction--intervention--prediction, and logical abduction coupled to learning
\citep{rubin1974causal,pearl2009causality,pawlowski2020dscm,
dai2019bridging}. & Here $u$ remains a task-side referent across treatments or
worlds. Their abduction objects (noise, logical facts, identity posteriors)
are not the unit token of \cref{hyp:shared-form-token}, and each tradition
keeps its own estimands and identification conditions. \\
\bottomrule
\end{tabular}
\end{table}

The history also separates persistent-unit variation from environment
variation. Random effects and personalization primarily ask how responses vary
across persistent individuals. Covariate shift, domain adaptation, and
invariant prediction primarily organize variation across environments or
selection regimes \citep{shimodaira2000covariate,bendavid2010theory,
peters2016invariant}. These axes can be crossed: a response may vary across
units while remaining invariant across environments, vary across environments
while remaining homogeneous across units, or vary along both. Collapsing these
axes would conflate distinct sources of variation.

The term \emph{abduction} likewise has established neighboring meanings. In
structural-causal counterfactual reasoning it infers exogenous noise before
intervention and prediction, while Abductive Learning couples logical
abduction to machine learning \citep{pearl2009causality,pawlowski2020dscm,
dai2019bridging}. Unit abduction instead names the construction of a contextual
unit token from factual evidence (\cref{eq:unit-abduction}). A which-unit
belief may approximate the world-side identity conditional and feed the
identity-mixture learner; the access step remains the formed token
(\cref{app:interface-distinctions}). These
learner formulations may coexist in a larger model while remaining distinct
inference operations.

\section{Discussion and Limitations}
\label{sec:discussion}

The unit primitive is useful when a learning question depends on persistence,
attribution, or unit-level evaluation. For row-level deployments with no
same-unit question, the ordinary sample formulation is the appropriate special
case. That sufficiency does not make the rows homogeneous units: without a
declared unit axis the row-level law is silent about unit-response homogeneity.

Two readings of the preceding sections are methodological rather than theorems.
For many applied tasks, a natural starting class admits structured response
heterogeneity and lets data decide whether the unit-omitting restriction
\cref{eq:constant-token,eq:token-ignored} is adequate; this does not claim that
every task is empirically heterogeneous. Separately, the comparisons in
\cref{sec:related} can be read as an organizing program in which different
fields recover partial aspects of a unit-conditioned family. That reading is
not a reduction.

\Cref{hyp:shared-form-token} is a computational interface, not a
learnability theorem. The structured class makes it a restriction: $R_\theta$
must read the token through a declared simple relation. The running instance
\cref{def:linear-shared-form} is a finite-dimensional linear predictor; other
declared simple maps of $z$ instantiate the same class. Without a bound on $d$,
or with an unrestricted nonlinear readout of $z$, the
factorization is again a reparameterization. The class is misspecified when
residual unit-specific variation cannot be absorbed into the declared simple
relation. The paper does not prove that some learning rule drives the
approximation penalty to zero as training information grows, and joint training
of $(T_\phi,R_\theta)$ does not by itself identify either module. The theorems
in \cref{sec:consequences} analyze an identity-mixture learner; a deployed
system may instead use the token composition. These are alternative
coordinates, not a proof that a tokenizer recovers the identity posterior.

The unit boundary is task-relative. Within a fixed declaration, a trusted key
resolves the referent but does not identify the token: $z_\theta(k)$ may be
reparameterized with $R_\theta$, and learning it does not recover the
individual. Referential identity also supplies no intrinsic similarity;
borrowing requires a metric, kernel, graph, hierarchy, or other coupling. In the
linear running instance the inner product in \cref{eq:linear-predictor} is the
default readout; other simple relations remain open. When identity is unresolved, identifying
$Q_\phi$ and the fixed-unit kernels still needs shared-unit observations or
other restrictions. Response-informative observations belong in $\mathcal O$;
a merely proposed query does not trigger a second abduction step.

The formal results already indicate what would test a unit-centered claim.
Unlinked single-row observations cannot separate a restricted heterogeneous
witness from a pooled homogeneous world; trusted same-unit pairs can
(\cref{prop:single-row-collapse}). A record-wise split mixes known-unit
new-event prediction with unseen-unit generalization
(\cref{prop:split}), and row weighting disagrees with unit weighting whenever
multiplicities differ (\cref{prop:risk-weighting}). An empirical study should
therefore keep a matched unit-omitting baseline, report unit-belief quality
separately from fixed-unit response quality when the identity-mixture
specialization is used, and declare whether the split is unit-disjoint
(\cref{app:evaluation-checklist}). No such study is reported here.

Causal estimands require an additional layer. The primitive preserves the
referent across a query family; intervention, counterfactual coupling, and
identification assumptions are specified separately
(\cref{app:discoscm-specialization}). Token reparameterization, the
identity-belief specialization, and the fixed evidence cutoff are collected in
\cref{app:interface-distinctions,app:implementation-boundaries}.

\section{Conclusion}

The paper's organizing claim is that machine learning learns shared structure
across task-declared units from noisy, selectively observed, unit-linked events.
A sample records an event; the task first declares the persistent referent, and
only then chooses an ID, embedding, or other representation. Supervised learning
takes a family of unit-conditioned response laws as its semantic object.
\Cref{hyp:shared-form-token} states a computational factorization for learning that
family: every learner-side path from the unit to prediction passes through a
contextual unit token, all units use one form $R_\theta$, and represented
unit-specific differences must pass through the token. The learned object is the
pair $(T_\phi,R_\theta)$; the structured class is a simple relation in the token,
with a linear predictor as the running instance
(\cref{def:linear-shared-form}). A
unit-omitting learner specification is unit-insensitive, while a constant or
ignored token is a canonical realization of that specification. The conceptual boundaries are
collected in \cref{app:interface-distinctions}.

Oracle predictive value of unit information is then separated from evidence
accessibility; the learned pair incurs an additional approximation penalty;
and marginal prediction alone does not reveal the internal token-and-form
decomposition. Trusted same-unit linkage provides a restricted positive
boundary. When preserving the unit changes none of a declared target, estimand,
admissible answer, or evaluation protocol, the unit-marginal formulation is
sufficient.

\begingroup
\footnotesize
\setlength{\bibsep}{0.2pt plus 0.1ex}
\bibliographystyle{plainnat}
\bibliography{references}
\endgroup

\clearpage
\appendix
\input{appendix}

\end{document}

%% file: figures/figure-style.tex
\definecolor{ualUnitBlue}{RGB}{39,94,156}
\definecolor{ualBeliefTeal}{RGB}{0,112,104}
\definecolor{ualEventAmber}{RGB}{190,104,8}
\definecolor{ualQueryPurple}{RGB}{104,75,145}
\definecolor{ualNeutral}{RGB}{82,88,96}
\definecolor{ualLight}{RGB}{244,246,248}
\definecolor{ualIdentityA}{RGB}{41,91,132}
\definecolor{ualIdentityB}{RGB}{177,70,54}
\definecolor{ualIdentityC}{RGB}{111,117,47}

\tikzset{
  ual flow/.style={-{Latex[length=2.2mm,width=1.5mm]}, line width=0.75pt, draw=ualNeutral},
  ual world flow/.style={-{Latex[length=2.2mm,width=1.5mm]}, line width=0.9pt, draw=ualUnitBlue},
  ual learner flow/.style={-{Latex[length=2.2mm,width=1.5mm]}, line width=0.85pt, dashed, draw=ualBeliefTeal},
  ual query flow/.style={-{Latex[length=2.2mm,width=1.5mm]}, line width=0.85pt, dash pattern=on 2.8pt off 1.3pt on 0.7pt off 1.3pt, draw=ualQueryPurple},
  ual prohibited flow/.style={-{Latex[length=2.0mm,width=1.4mm]}, line width=0.75pt, densely dotted, draw=ualNeutral!70},
  ual world/.style={draw=ualUnitBlue, fill=ualUnitBlue!4, rounded corners=2pt, line width=0.85pt, align=center, inner sep=5pt},
  ual unit/.style={circle, draw=ualUnitBlue, fill=ualUnitBlue!12, line width=1.05pt, align=center, minimum size=9.5mm, inner sep=1pt},
  ual event/.style={draw=ualNeutral, fill=white, rounded corners=2pt, line width=0.75pt, align=center, minimum height=8mm, inner sep=3.5pt},
  ual variation/.style={draw=ualEventAmber, fill=ualEventAmber!5, rounded corners=2pt, line width=0.8pt, align=center, inner sep=3.5pt},
  ual learner/.style={draw=ualBeliefTeal, fill=ualBeliefTeal!5, rounded corners=2pt, dashed, line width=0.9pt, align=center, inner sep=5pt},
  ual belief/.style={ellipse, draw=ualBeliefTeal, fill=ualBeliefTeal!6, dashed, line width=0.95pt, align=center, minimum width=28mm, minimum height=11mm, inner sep=3pt},
  ual query/.style={draw=ualQueryPurple, fill=ualQueryPurple!5, rounded corners=2pt, line width=0.85pt, align=center, inner sep=4pt},
  ual target/.style={draw=ualNeutral, fill=ualLight, rounded corners=2pt, densely dotted, line width=0.8pt, align=center, inner sep=4pt},
  ual output/.style={draw=ualNeutral, fill=ualLight, rounded corners=2pt, line width=0.8pt, align=center, inner sep=4pt},
  ual panel/.style={draw=ualNeutral!45, fill=white, rounded corners=3pt, line width=0.6pt, inner sep=6pt},
  ual panel divider/.style={draw=ualNeutral!28, line width=0.5pt},
  ual restriction/.style={draw=ualNeutral!65, fill=ualLight!55, rounded corners=3pt, dashed, line width=0.8pt},
  ual identity A/.style={draw=ualIdentityA, fill=ualIdentityA!6, line width=0.9pt},
  ual identity B/.style={draw=ualIdentityB, fill=ualIdentityB!5, dashed, line width=0.9pt},
  ual identity C/.style={draw=ualIdentityC, fill=ualIdentityC!6, dash dot, line width=0.9pt},
  ual panel title/.style={font=\footnotesize\bfseries, text=ualNeutral, anchor=west},
  ual note/.style={font=\footnotesize, text=ualNeutral, align=center},
  ual tiny note/.style={font=\scriptsize, text=ualNeutral, align=left},
  ual brace/.style={decorate, decoration={brace, amplitude=4pt}, line width=0.7pt, draw=ualNeutral},
  ual banner/.style={draw=ualNeutral!45, fill=ualLight, rounded corners=3pt, line width=0.6pt},
  ual pill/.style={draw=ualNeutral!50, fill=ualLight, rounded corners=5pt, line width=0.65pt, font=\scriptsize, inner sep=3pt, align=center},
  ual axis/.style={-{Latex[length=1.5mm,width=1.05mm]}, draw=ualNeutral, line width=0.6pt},
  ual split box/.style={draw=ualNeutral!55, rounded corners=2pt, line width=0.7pt},
  ual split train/.style={ual split box, fill=ualLight},
  ual split test/.style={ual split box, fill=ualNeutral!16},
  ual chip/.style={ual event, minimum width=7.5mm, minimum height=5.6mm, inner sep=0.7pt, font=\scriptsize},
  pics/ual plot axis/.style={
    code={
      \draw[ual axis] (0,0) -- (28,0);
      \draw[ual axis] (0,0) -- (0,16);
      \foreach \t in {7,14,21}
        \draw[draw=ualNeutral,line width=0.4pt] (\t,0) -- ++(0,0.9);
      \foreach \t in {4,8,12}
        \draw[draw=ualNeutral,line width=0.4pt] (0,\t) -- ++(0.9,0);
      \node[anchor=north,font=\scriptsize,text=ualNeutral] at (28,-0.7) {$x$};
    }
  }
}

\pgfplotsset{
  compat=1.18,
  ual response axis/.style={
    width=0.29\linewidth,
    height=31mm,
    axis lines=left,
    xmin=0, xmax=1,
    ymin=0, ymax=1,
    xtick=\empty,
    ytick=\empty,
    xlabel={$x$},
    ylabel={$\mathbb E[Y\mid x,u]$},
    label style={font=\footnotesize},
    title style={font=\footnotesize\bfseries, yshift=-1mm},
    axis line style={draw=ualNeutral},
    clip=false
  }
}

%% file: figures/fig1-events-unit.tex
\begin{tikzpicture}[x=1mm,y=1mm,font=\footnotesize]
  \path[use as bounding box] (0,0) rectangle (160,48);

  \path[ual banner] (0,41.6) rectangle (160,47.6);
  \draw[draw=ualNeutral!35,line width=0.6pt]
    (80,42.2) -- (80,47.0);
  \node[anchor=west,ual note,text width=72mm,align=left]
    at (3.0,44.6)
    {\textbf{Task declaration:} what counts as one unit};
  \node[anchor=west,ual note,text width=74mm,align=left]
    at (83.0,44.6)
    {\textbf{Population statement:} $U\sim\Pi\ \longrightarrow\ U=u$};

  \node[ual panel title] at (3.0,39.0)
    {(a) Events share persistent unit identity};

  \node[ual event,inner sep=3.2pt] (event1) at (20,31.4)
    {event 1\\$(x_1,y_1)$};
  \node[ual event,inner sep=3.2pt] (event3) at (20,20.2)
    {event 3\\$(x_3,y_3)$};
  \node[ual unit,minimum size=10mm] (unitA-left) at (56,25.8)
    {unit\\$u_A$};

  \node[ual event,inner sep=3.2pt] (event2) at (20,7.4)
    {event 2\\$(x_2,y_2)$};
  \node[ual unit,minimum size=10mm] (unitB-left) at (56,7.4)
    {unit\\$u_B$};

  \draw[ual world flow] (event1.east) -- (unitA-left.150);
  \draw[ual world flow] (event3.east) -- (unitA-left.210);
  \draw[ual world flow] (event2.east) -- (unitB-left.west);

  \draw[ual panel divider] (80,1.4) -- (80,40.2);
  \node[ual panel title] at (83.0,39.0)
    {(b) The same unit selects a response law};

  \node[ual event,inner sep=3.2pt] (sample-right) at (94,25.4)
    {event $i$\\$(x_i,y_i)$};
  \node[ual unit,minimum size=10mm] (unitA-right) at (116,25.4)
    {unit\\$u_A$};
  \node[ual world,text width=24mm,inner sep=3pt] (law) at (142,25.4)
    {response law\\$P^\star_{u_A}(dy\mid x)$};

  \draw[ual world flow] (sample-right.east) -- (unitA-right.west);
  \draw[ual world flow] (unitA-right.east) -- (law.west);

  \node[ual variation,text width=44mm] (remaining-variation) at (129,8.4)
    {$u_A$ selects the law, not the event};
  \draw[densely dotted,line width=0.75pt,draw=ualNeutral!70]
    (law.south) -- (remaining-variation.north);
\end{tikzpicture}

%% file: figures/fig-response-law-wedge.tex
\begin{tikzpicture}[font=\footnotesize,x=1mm,y=1mm]
  \path[use as bounding box] (0,0) rectangle (160,58);

  \node[ual panel title] at (2,54) {(a) Unit-insensitive learner};
  \foreach \y/\id/\lab in {39/A/{u_A},29/B/{u_B},19/C/{u_C}}
    \node[ual unit,minimum size=8mm,inner sep=0.4pt]
      (hom-\id) at (9,\y) {$\lab$};
  \node[ual world,text width=24mm,minimum height=18mm,inner sep=2pt]
    (common-law) at (36,29) {common token $z_0$\\[2pt]shared $R_\theta$};
  \draw[ual world flow] (hom-A.east) -- (common-law.west);
  \draw[ual world flow] (hom-B.east) -- (common-law.west);
  \draw[ual world flow] (hom-C.east) -- (common-law.west);

  \node[ual panel title] at (56,54) {(b) Shared form $R_\theta$};
  \node[font=\scriptsize,text=ualNeutral] at (86,46.2) {tokenizer $T_\phi$};
  \foreach \y/\id/\lab/\ztok in {
      39/A/{u_A}/{$Z_A^c$},
      29/B/{u_B}/{$Z_B^c$},
      19/C/{u_C}/{$Z_C^c$}} {
    \node[ual unit,minimum size=8mm,inner sep=0.4pt]
      (structured-\id) at (60,\y) {$\lab$};
    \node[ual event,text width=11mm,minimum height=7mm,inner sep=1pt]
      (tok-\id) at (86,\y) {\ztok};
    \draw[ual world flow] (structured-\id.east) -- (tok-\id.west);
  }
  \node[ual world,text width=15mm,minimum height=28mm,inner sep=2pt]
    (shared-R) at (114,29) {shared\\$R_\theta$};
  \draw[ual learner flow] (tok-A.east) -- (shared-R.west);
  \draw[ual learner flow] (tok-B.east) -- (shared-R.west);
  \draw[ual learner flow] (tok-C.east) -- (shared-R.west);

  \node[ual panel title] at (124,54) {(c) Saturated};
  \foreach \y/\id/\lab in {39/A/{u_A},29/B/{u_B},19/C/{u_C}} {
    \node[ual unit,minimum size=8mm,inner sep=0.4pt]
      (saturated-\id) at (128,\y) {$\lab$};
    \node[ual world,text width=14mm,minimum height=7mm,inner sep=1pt]
      (saturated-law-\id) at (152,\y) {$R_{\lab}$};
    \draw[ual world flow]
      (saturated-\id.east) -- (saturated-law-\id.west);
  }

  \draw[ual panel divider] (54,4) -- (54,56);
  \draw[ual panel divider] (122,4) -- (122,56);
\end{tikzpicture}

%% file: figures/fig2-belief-queries.tex
\begin{tikzpicture}[font=\footnotesize,x=1mm,y=1mm]
  \path[use as bounding box] (0,0) rectangle (160,64);

  \node[ual panel title] at (3,60.5)
    {(a) Direct unit access --- lookup token};
  \node[ual event,text width=15mm,minimum height=10mm,inner sep=2pt]
    (key) at (10,48) {trusted key\\$k$};
  \node[ual world,text width=17mm,minimum height=10mm,inner sep=2pt]
    (resolver) at (32,48) {trusted\\resolver};
  \node[ual world,text width=19mm,minimum height=12mm,inner sep=2pt]
    (address) at (55,48) {resolved referent\\$u(k)$\\stable address};
  \node[ual event,text width=19mm,minimum height=14mm,inner sep=2pt]
    (resolved) at (86,48) {lookup token\\$Z_{u(k)}^c$\\{\scriptsize $z_\theta(k)$}};
  \draw[ual world flow] (key) -- (resolver);
  \draw[ual world flow] (resolver) -- (address);
  \draw[ual world flow] (address) -- node[above=1pt,font=\tiny,fill=white,inner sep=0.5pt]{lookup} (resolved);

  \node[ual panel title] at (3,29.5)
    {(b) Unit abduction --- formed token};
  \node[ual event,text width=20mm,minimum height=11mm,inner sep=2pt]
    (evidence) at (15,13) {factual evidence\\$\mathcal O$};
  \node[ual learner,text width=20mm,minimum height=11mm,inner sep=2pt]
    (abduction) at (46,13) {tokenizer\\$T_\phi$};
  \node[ual event,text width=20mm,minimum height=12mm,inner sep=2pt]
    (token) at (78,13)
    {formed token\\$Z^c$};
  \draw[ual learner flow] (evidence) -- (abduction);
  \draw[ual learner flow] (abduction) -- (token);

  \node[ual panel title] at (104,60.5) {Shared response form};
  \node[ual query,text width=40mm,minimum height=8mm,inner sep=2pt]
    (queries) at (135,54.5)
    {queries $x^Q_1,\,x^Q_2,\,x^Q_3$};
  \node[ual output,text width=40mm,minimum height=26mm,inner sep=4pt,align=center]
    (response) at (135,24.5)
    {{\footnotesize\bfseries shared form $R_\theta$}\\[4pt]
     $R_\theta(dy\mid x^Q,c,Z^c)$\\[5pt]
     {\scriptsize direct: $Z_{u(k)}^c=z_\theta(k)$}\\[2pt]
     {\scriptsize abduction: formed token $Z^c$}};

  \draw[ual query flow] (queries) -- (response.north);
  \draw[ual world flow,rounded corners=2pt]
    (resolved.east) -- (100,48) |- ([yshift=8mm]response.west);
  \draw[ual learner flow,rounded corners=2pt]
    (token.east) -- (100,13) |- ([yshift=-4mm]response.west);

  \draw[ual panel divider] (1,32.5) -- (100,32.5);
  \draw[ual panel divider] (102,3) -- (102,62);
\end{tikzpicture}

%% file: figures/fig3-marginal-collapse.tex
\begin{tikzpicture}[
  font=\footnotesize,x=1mm,y=1mm,
  unit A/.style={draw=ualUnitBlue,line width=1.05pt,dash dot},
  unit B/.style={draw=ualUnitBlue!72,line width=1.05pt,dashed},
  same marginal/.style={draw=ualNeutral,line width=1.2pt}
]
  \path[use as bounding box] (0,0) rectangle (160,56);

  \node[ual panel title] at (3,53.2)
    {(a) Heterogeneous fixed-unit laws};
  \pic at (14,35) {ual plot axis};
  \node[rotate=90,anchor=south,font=\scriptsize,text=ualNeutral]
    at (10.4,43) {$P(Y=1\mid x,u)$};
  \draw[unit A] (14,47) -- (42,47);
  \draw[unit B] (14,39) -- (42,39);
  \draw[same marginal] (14,43) -- (42,43);
  \node[anchor=west,font=\scriptsize,text=ualUnitBlue]
    at (42.6,47.0) {$u_A$};
  \node[anchor=west,font=\scriptsize,text=ualUnitBlue!72]
    at (42.6,39.0) {$u_B$};
  \node[ual tiny note,text width=38mm,align=left] at (66,44.2)
    {$p_{u_A}=\tfrac34$, $p_{u_B}=\tfrac14$,\\equal weights $\tfrac12$, $\tfrac12$};

  \node[ual panel title] at (3,26.6)
    {(b) unit-independent law};
  \pic at (14,8) {ual plot axis};
  \node[rotate=90,anchor=south,font=\scriptsize,text=ualNeutral]
    at (10.4,16) {$P(Y=1\mid x,u)$};
  \draw[same marginal] (14,16) -- (42,16);
  \node[ual tiny note,text width=38mm,align=left] at (66,16.5)
    {one law for every $u$,\\$P(Y=1\mid x,u)=\tfrac12$};

  \draw[ual panel divider] (2,29.2) -- (88,29.2);
  \draw[ual panel divider] (114,4) -- (114,53);

  \node[ual pill] (mix) at (101,29) {mix over $U$};
  \draw[ual flow] (88,43) -- (mix.west);
  \draw[ual flow] (88,16) -- (mix.west);
  \draw[ual flow] (mix.east) -- (114,29);

  \node[ual panel title] at (117,53.2) {Observable marginal};
  \pic at (124,20) {ual plot axis};
  \node[rotate=90,anchor=south,font=\scriptsize,text=ualNeutral]
    at (120.4,28) {$P(Y=1\mid x)$};
  \draw[same marginal] (124,28) -- (152,28);
  \node[anchor=south,font=\scriptsize] at (148.5,28.4) {$\tfrac12$};
  \node[ual note,text width=36mm] at (138,12.0)
    {same marginal law};
\end{tikzpicture}

%% file: figures/fig4-repeated-units.tex
\begin{tikzpicture}[font=\footnotesize,x=1mm,y=1mm]
  \path[use as bounding box] (0,0) rectangle (160,58);

  \node[ual panel title] at (3,55.4)
    {(a) Repeated units change the estimand};

  \node[anchor=west,font=\footnotesize\bfseries,text=ualNeutral]
    at (4,50.0) {Row average};
  \node[anchor=east,ual tiny note] at (74,50.0)
    {weight $m(u)/N$};

  \foreach \x/\lab in {10/{A_1},19/{A_2},28/{A_3}}
    \node[ual chip,ual identity A] at (\x,43.6) {$\lab$};
  \node[ual chip,ual identity B] at (40,43.6) {$B_1$};
  \foreach \x/\lab in {52/{C_1},61/{C_2}}
    \node[ual chip,ual identity C] at (\x,43.6) {$\lab$};

  \draw[ual brace,draw=ualIdentityA] (5.5,38.6) -- (32.5,38.6);
  \node[ual tiny note,text=ualIdentityA] at (19,34.6) {$3/N$};
  \draw[draw=ualIdentityB,dashed,line width=0.9pt]
    (36.4,38.8) -- (43.6,38.8);
  \node[ual tiny note,text=ualIdentityB] at (40,34.6) {$1/N$};
  \draw[ual brace,draw=ualIdentityC,dash dot] (47.5,38.6) -- (65.5,38.6);
  \node[ual tiny note,text=ualIdentityC] at (56.5,34.6) {$2/N$};

  \draw[densely dotted,line width=0.65pt,draw=ualNeutral!55]
    (4,31.2) -- (74,31.2);
  \node[anchor=west,font=\footnotesize\bfseries,text=ualNeutral]
    at (4,27.4) {Per-unit average};
  \node[anchor=east,ual tiny note] at (74,27.4)
    {weight $1/M$};

  \node[ual unit,ual identity A,minimum size=9.5mm] at (15,16.8) {$u_A$};
  \node[ual unit,ual identity B,minimum size=9.5mm] at (39,16.8) {$u_B$};
  \node[ual unit,ual identity C,minimum size=9.5mm] at (63,16.8) {$u_C$};
  \node[ual tiny note,text=ualIdentityA] at (15,8.2) {$\bar\ell(u_A)$};
  \node[ual tiny note,text=ualIdentityB] at (39,8.2) {$\bar\ell(u_B)$};
  \node[ual tiny note,text=ualIdentityC] at (63,8.2) {$\bar\ell(u_C)$};

  \draw[ual panel divider] (80,2.0) -- (80,56.4);
  \node[ual panel title] at (84,55.4)
    {(b) Repeated units change the split question};

  \node[anchor=west,font=\footnotesize\bfseries,text=ualNeutral]
    at (84,50.4) {Record-wise split};

  \node[ual split train,minimum width=34mm,minimum height=16mm]
    (train-rw) at (102.5,39.6) {};
  \node[ual split test,minimum width=34mm,minimum height=16mm]
    (test-rw) at (139.5,39.6) {};
  \node[ual tiny note] at (102.5,45.4) {train};
  \node[ual tiny note] at (139.5,45.4) {test};
  \foreach \x/\lab in {91.5/{A_1},102.5/{A_2}}
    \node[ual chip,ual identity A] at (\x,37.6) {$\lab$};
  \node[ual chip,ual identity B] at (113.5,37.6) {$B_1$};
  \node[ual chip,ual identity A] at (131.5,37.6) {$A_3$};
  \node[ual chip,ual identity B] at (147.5,37.6) {$B_2$};
  \node[ual tiny note] at (121,29.4)
    {known unit, new event};

  \draw[densely dotted,line width=0.65pt,draw=ualNeutral!55]
    (84,26.8) -- (156,26.8);
  \node[anchor=west,font=\footnotesize\bfseries,text=ualNeutral]
    at (84,23.6) {Whole-unit split};

  \node[ual split train,minimum width=34mm,minimum height=16mm]
    (train-wu) at (102.5,13.6) {};
  \node[ual split test,minimum width=34mm,minimum height=16mm]
    (test-wu) at (139.5,13.6) {};
  \node[ual tiny note] at (102.5,19.4) {train};
  \node[ual tiny note] at (139.5,19.4) {test};
  \foreach \x/\lab in {91.5/{A_1},102.5/{A_2}}
    \node[ual chip,ual identity A] at (\x,11.6) {$\lab$};
  \node[ual chip,ual identity B] at (113.5,11.6) {$B_1$};
  \foreach \x/\lab in {131.5/{C_1},147.5/{C_2}}
    \node[ual chip,ual identity C] at (\x,11.6) {$\lab$};
  \node[ual tiny note] at (121,3.6)
    {new unit};
\end{tikzpicture}

%% file: appendix.tex
\section{Notation and Semantic Roles}
\label{app:notation}

The main text separates three layers that this appendix must not collapse.
The \emph{semantic} layer is the task-declared unit $u\in\mathcal U$ and its
response law $P_u^\star$. The \emph{exact mixture} layer is the first line of
\cref{eq:world-learner-response-coordinates}: unresolved identity mixes the
world targets $P_u^\star$ over units, without introducing learned parameters.
The \emph{computational} layer contains learner compositions under
\cref{hyp:shared-form-token}: the default token-space composition uses
$T_\phi(\,\cdot\mid\mathcal O,c)$ and $R_\theta$, while the theorem
specialization uses $Q_\phi$ and an identity-indexed response model. A learner
composition coincides with the exact world mixture only under the stated access,
response-realization, response-sufficiency, and external-query conditions.
World-side identity posteriors
$P(U\in du\mid\mathcal O)$ remain well-defined probability objects whether or
not the learner represents them. The theorems below use the identity-mixture
learner specialization; they are not a definition of unit abduction and do
not require the computational token to be an identity posterior. In
\cref{sec:consequences} the complete response input is written $W$; the
same object is the lowercase index $w$ of the kernel $K_u^w$ in
\cref{app:heterogeneity}.

\begin{table}[t]
\centering
\caption{Notation by semantic layer. Learner quantities are listed with the
computational pair first; $Q_\phi$ belongs to the identity-mixture learner
specialization.}
\label{tab:notation}
\footnotesize
\renewcommand{\arraystretch}{0.90}
\setlength{\tabcolsep}{3pt}
\begin{tabular}{@{}>{\raggedright\arraybackslash}p{0.18\linewidth}>{\raggedright\arraybackslash}p{0.24\linewidth}>{\raggedright\arraybackslash}p{0.51\linewidth}@{}}
\toprule
Symbol & Intended type & Boundary \\
\midrule
\multicolumn{3}{@{}l}{\textbf{World and population}} \\
$U:\Omega\to\mathcal U$ & population unit selection variable & which individual in $\mathcal U$ \\
$\Pi(du)$ & population unit law & which-individual randomness under $P$ \\
\addlinespace[2pt]
\multicolumn{3}{@{}l}{\textbf{Fixed individual and event}} \\
$u\in\mathcal U$ & realized persistent unit & fixed value of $U$; indexes a unit-conditioned law but is not synonymous with a learned numerical representation \\
$K_u(dx,dy)$ & fixed-individual law & retains exogenous/event variation after $U=u$ \\
$X,Y$ & population variables & observed values may supply pre-answer evidence; an observed evidence response is not the current target \\
$X_u,Y_u$ & fixed-individual variables & conditional laws under $U=u$ \\
$Y_u(a)$ & potential response & conventional causal notation for individual $u$ under treatment $a$ \\
\addlinespace[2pt]
\multicolumn{3}{@{}l}{\textbf{Observed record and attribution}} \\
$i$ & sample index & counts observations, not units \\
$(x_i,y_i)$ & realized record & supplies values that specify population events \\
$\mathcal D_U^{\rm world}$ & conceptual complete-data table & includes the definite $u_i$ whether or not attribution is observed \\
$\mathcal D_U^{\rm obs}$ & learner-visible-attribution table & special regime in which persistent-unit labels, or a trusted resolver that determines them, are learner-visible \\
$K=k$ and $u(k)$ & trusted key and resolved unit & stable referential and update address; determines which persistent unit a record concerns \\
\addlinespace[2pt]
\multicolumn{3}{@{}l}{\textbf{Learner quantities}} \\
$T_\phi(\mathcal O_u,c)$ & tokenizer & forms a contextual unit token from unit-attached factual evidence and context; may be deterministic or a kernel on token space; jointly learned with $R_\theta$ \\
$Z_u^c\in\mathcal Z$ & contextual unit token & learner-side representation supplied to $R_\theta$ in context $c$; not the unit \\
$R_\theta(dy\mid x,c,z)$ & shared response-law form & the same learned map for every unit; learner-side unit differences enter only through the token $z$ \\
$Q_\phi(du\mid\mathcal O)$ & which-unit belief & a law on $\mathcal U$ that may approximate $P(U\in du\mid\mathcal O)$ and feeds the identity-mixture learner; not the unit, not the token, and not the definition of unit abduction \\
$z_\theta(k)$ & ID-indexed lookup token & learned table row implementing \(Z_{u(k)}^c\) under direct access (embedding, factor, random effect, \ldots); not the unit \(u(k)\) \\
$f(x;u)$ & unit-conditioned predictor & fixed $u$ conditions a member of the predictor family \\
$f_0(x)$ & ordinary row-level predictor & collapse notation, not an average over units \\
$P_\theta(dy\mid x^Q,c^Q,u)$ & identity-indexed response model & aimed at $P_u^\star$ and used by the theorem's identity-mixture learner; under direct access this is $R_\theta(dy\mid x^Q,c^Q,z_\theta(k))$ \\
\addlinespace[2pt]
\multicolumn{3}{@{}l}{\textbf{Evidence, query, and target}} \\
$\mathcal O$ & factual evidence & observed-event information available before the answer and declared usable for unit inference \\
$P(U\in du\mid\mathcal O)$ & evidence-conditioned unit posterior & conditional law of the already-realized unit under the data-generating distribution \\
$x^Q$ & response query & supplied response argument; not new abductive evidence merely by being queried \\
$c^Q$ & explicit response context & answer-time information declared to change $Y^Q$ directly beyond $(U,x^Q)$; omitted when empty \\
$W$ & complete response input in \cref{sec:consequences} & $W=X^Q$ or $W=(X^Q,C^Q)$; not the unit token $Z_u^c$ \\
$K_u^w$ & unit-conditioned response kernel at query $w$ & $w=(x^Q,c^Q)$ is the realized complete response input; not a token \\
$Y^Q$ & current response target & is excluded from the pre-answer evidence \\
$P_u^\star(dy\mid x^Q,c^Q)$ & population response conditional & shorthand for $P^\star(Y^Q\in dy\mid X^Q=x^Q,C^Q=c^Q,U=u)$ under response sufficiency \\
\bottomrule
\end{tabular}
\end{table}

As summarized in \cref{tab:notation}, the domain $\mathcal U$ belongs to the
declared population learning problem. Dataset indices label records rather than
creating new unit domains. A value $x_i$ instead specifies
the event $\{X=x_i\}$; a pair $(x_i,y_i)$ may specify
$\{X=x_i,Y=y_i\}$. If a study makes attribution learner-visible, lowercase
$u_i$ may annotate its records, and $u_i=u_j$ then records a supplied
same-individual relation. This metadata provides direct unit access; it does not
make a model-side representation observed. Unit abduction applies when no
resolver identifies the unit. Under direct access,
lowercase $x_i,y_i$
are event-level realized values associated with the fixed-unit event law
$K_{u_i}(dx,dy)$. The notation keeps a record value distinct from the fixed-unit
stochastic object, so we avoid writing $X_i:=X_{u_i}$ or $Y_i:=Y_{u_i}$.
Repeated observations require a separately specified joint law; the population
unit variable remains $U$ rather than a foundational family $(U_i)$.

Let $\Pi(du)$ denote the population law of $U$, and let $K_{X,u}(dx)$ be the
fixed-individual observation kernel.  Then
\begin{equation}
  P_X(dx)=\int_{\mathcal U}K_{X,u}(dx)\,\Pi(du),
  \qquad
  P(X_U\in A\mid U=u)=K_{X,u}(A).
  \label{eq:appendix-selected-variable}
\end{equation}
The first variation comes from drawing $U$; the second remains inside
$K_{X,u}$ after an individual is fixed. A realized record contributes an event
from the selected unit rather than a new unit variable $U_i$. Repeated-record
sampling and uncertain linkage require an explicitly declared joint observation
model.

\paragraph{Evidence is event-level information.}
Uppercase symbols denote random variables; lowercase symbols denote realized
values.  A positive-probability event $E$ permits ordinary conditioning
$P(U\in du\mid E)$. When the protocol records evidence at a realized value
$o$, $P(U\in du\mid\mathcal O=o)$ denotes evaluation of a regular conditional
distribution. We use $\mathcal O$ as protocol-level shorthand for either form
of pre-answer information.  Thus $\{X=x_i,Y=y_i\}$ is shorthand
for observed evidence values, whereas $(x_i,y_i)$ is their realized-value
encoding.  Any $y_i$ admitted here is a historical or other already observed
response, not the current target response. A designed candidate $x^Q$ supplied
to $P_\theta(dy\mid x^Q,c^Q,u)$ is a response query. It contributes evidence
about $U$ only when it was factually observed before the answer and the protocol
incorporates it into $\mathcal O$.

When no resolver identifies the realized unit, unit abduction forms a
contextual unit token as in \cref{eq:unit-abduction}; the default prediction is
the token composition \cref{eq:token-composition}. The identity-mixture
specialization used by the theorems is \cref{eq:core-composition}. These
displays are not repeated here. Direct lookup and abduction differ by
provenance, not by the type of the object they return.

In the basic supervised protocol,
$\mathcal O_i=\{X=x_i\}$ determines the world-side conditional
$P(U\in du\mid X=x_i)$. The target $y_i$ scores the learner mixture but is
excluded from $\mathcal O_i$. A history encoder, probabilistic identity
rule, or other map may implement $T_\phi$; the primitive is
architecture-independent.

The response-sufficiency condition is
\begin{equation}
  Y^Q\perp\!\!\!\perp\mathcal O\mid(U,x^Q,c^Q),
  \label{eq:appendix-response-sufficiency}
\end{equation}
with $c^Q$ omitted when $(U,x^Q)$ suffices. This assumption applies when
information in the factual record that directly changes the response beyond its
role in forming unit information is represented explicitly by $c^Q$, rather
than by an undeclared dependence of the response kernel on the full evidence
bundle. The same raw observation may support both roles when the protocol
declares them separately. On the learner side, the role of $\mathcal O$ in
the response is mediated by the formed token after that context has been
represented. To identify the exact response conditional with a mixture using
the fixed identity law $P(U\in du\mid\mathcal O)$, the protocol additionally
satisfies the external-query condition in \cref{eq:external-query-contract}.
Otherwise the exact mixing law is generally
$P(U\in du\mid\mathcal O,x^Q,c^Q)$. This is not permission to update
formed unit information from a merely proposed query: any unit-informative
observed variable must be declared as factual evidence before the token, or,
in the identity specialization, the belief, is formed.

\paragraph{Observed identifiers and direct unit access.}
If an ID event uniquely determines which persistent unit is present, a trusted
resolver implements \(k\mapsto u(k)\) and fixes a stable address. A separate
lookup at that address supplies the token, typically the learned row
\(Z_{u(k)}^c=z_\theta(k)\). The resolver fixes the referent; the lookup selects
the token slot to read or update. Neither operation makes \(z_\theta(k)\) into the unit.
Learning \(z_\theta(k)\) is fitting the token, not a which-unit inference. The
corresponding which-unit conditional $P(U\in du\mid\mathrm{ID}=k)$ is a point
mass (a Dirac measure). The identity-mixture specialization then uses
$\delta_{u(k)}$; no learned $Q_\phi$ module need be run, and the Dirac mass is
not an executed abduction. Crucially, the Dirac mass is on the stable referential unit $u(k)$,
not on the trainable and potentially reparameterized token $z_\theta(k)$. A
history-dependent state for the resolved unit belongs to the tokenizer's
context, or to the response layer, and may change without changing this
identity law.

\section{Conceptual Clarifications and Inferential Distinctions}
\label{app:interface-distinctions}

This section is the boundary register for the main text. Four layers should be
kept separate: the task-side referent $u$, the learner-side token $Z_u^c$, the
world-side response family $\{P_u^\star\}$, and the deployed prediction produced
by a particular learner formulation. A trusted key resolves a referent and a
lookup selects a token; unit abduction forms a token from factual evidence.
Neither access procedure fixes the token parameterization or supplies a
similarity rule between units. Likewise, a unit-omitting learner can fit a
row-level marginal in either a homogeneous or heterogeneous world, while the
identity-mixture specialization $Q_\phi$ is a theorem-specific formulation
rather than the definition of the tokenizer. The subsections below collect the
associated semantic, evidence, and identification qualifications so that the
main text can state the positive model without repeating every boundary case.

\subsection{Persistent units and model representations}

The declared unit domain $\mathcal U$ carries the task's persistent-referent
semantics. A learned embedding, preference factor, or random effect may
parameterize its lookup token, but is not thereby the realized value of $U$. In
a trusted-ID model, $u(k)$ is the resolved unit and $z_\theta(k)$ is
the token $Z_{u(k)}^c$. The token may be transformed jointly with the downstream
model without changing predictions, so its coordinates need not be unique or
scientifically identifiable \citep{bengio2013representation}. A posterior over
such response-side coordinates is likewise not automatically the which-unit
belief $Q_\phi(du\mid\mathcal O)$, and neither object is a sample-inclusion
probability or an importance weight.

\subsection{Tokens, which-unit beliefs, and point summaries}

The default learner-side object is the contextual unit token $Z^c$.
In the identity-belief specialization, $Q_\phi(du\mid\mathcal O)$ instead
targets the world conditional $P(U\in du\mid\mathcal O)$ when the protocol
declares that objective; calibration and recovery require additional
conditions. Means and selected modes, when defined, are deterministic
functionals of this belief, whereas a computational draw satisfies
$\widetilde U_\phi\mid\mathcal O\sim
Q_\phi(\cdot\mid\mathcal O)$. The response-level prediction propagates the
full belief through \cref{eq:core-composition}; a downstream decision
may then extract a point prediction, interval, action, or abstention.

\subsection{Record, evidence, and query roles}

The notation table separates record values, factual evidence, response queries,
and current targets. In particular, $i$ indexes an event rather than defining a
world variable $U_i$, and a supplied candidate $x^Q$ is evaluated at the token
formed from the fixed evidence cutoff. A newly observed factual event may update
that token, and may update $Q_\phi$ in the identity-belief specialization; the
current target remains excluded from pre-answer evidence.

If a response-relevant context is neither fixed by the protocol nor absorbed
into $x$, omitting it from the response specification generally yields, for compatible
regular-conditional versions and $P_{X,U}$-almost every $(x,u)$,
\begin{equation}
  P_u^\star(dy\mid x)
  =\int P_u^\star(dy\mid x,c)\,
    P^\star(dc\mid x,U=u).
  \label{eq:context-marginalization}
\end{equation}
Consequently, unit-dependent context distributions can create
heterogeneity at the marginal response level even when the context-conditional response
surface is common across units. Conversely, context-specific differences
between units can cancel after integration and disappear from the marginal
response specification. Declaring $c^Q$ is therefore optional at the level of mathematical
typing but substantive whenever the task must distinguish which individual
differs from which condition that individual occupies.

An evidence bundle may contain several observed values without asserting that
they concern the same individual. Repeated-individual attribution and entity
linkage require an additional joint observation model.

\subsection{Marginalization, erasure, and component identification}

Three operations associated with unit erasure have different targets. At the
prediction level, marginalizing the exact world law over the conditional unit
distribution yields the row-wise law in
\cref{prop:single-row-collapse}. At the response-structure level,
$f(x;u)=f_0(x)$ for every relevant $u$ is a sufficient invariance condition for
a sample-only rule $f_0$; it does not define $f_0$ as an average of
$f(x;u)$. At the data level, deleting attribution columns from a table is a
deterministic map, not a mixture. The first is probabilistic marginalization,
the second is a response restriction, and the third is a data edit.

For the identity-belief specialization of the composition, the exact world
mixture equals $P(Y\in dy\mid X=x_i)$ by
\cref{prop:single-row-collapse}. When a density or probability mass exists and
$\mathcal O_i=\{X=x_i\}$, the learner score in that specialization is
\begin{equation}
  \log\widehat p_{\theta,\phi}(y_i\mid\mathcal O_i;\,x_i)
  =\log\!\int_{\mathcal U}
    p_\theta(y_i\mid x_i,u)Q_\phi(du\mid\mathcal O_i).
  \label{eq:appendix-rowwise-log-score}
\end{equation}
The default token-space prediction instead scores
$\int R_\theta(dy_i\mid x_i,c,z)\,T_\phi(dz\mid\mathcal O_i,c)$. Here the
observed $x_i$ serves both as pre-target evidence and as the response
covariate, while an alternative $x^Q$ reuses the same formed token. The log of
the mixture is generally different from the mixture of conditional log scores;
for continuous $Y$, the likelihood is a density rather than a point
probability. Matching the marginal law alone does not identify $T_\phi$ or
$Q_\phi$, recover a persistent unit, or establish response dependence on $u$.

\section{Unit-Conditioned Response Heterogeneity}
\label{app:heterogeneity}

Throughout this section, $w=(x^Q,c^Q)$ denotes a declared complete response
input---not a unit token $Z_u^c$. This is the same object written $W$ in
\cref{sec:consequences}. The unit primitive makes it possible to
define response heterogeneity without committing to a particular random-effect
parameter, neural architecture, or clustering device. This section isolates
that definition, separates it from the structural assumptions needed for
learning, and then locates several established literatures in the resulting
coordinates. Its scope is the precise object of heterogeneity, the
restrictions that make it learnable, and its relation to neighboring forms of
variation. \Cref{hyp:shared-form-token} is one such restriction: differences
among $K_u^w$ must be expressed by contextual unit tokens under a shared form
$R_\theta$.

\subsection{The response-law profile}

Let $w=(x^Q,c^Q)$ denote a declared response query, with $c^Q$ omitted when it
is empty, and let $\mathfrak Q$ be a common admissible query family for the
task. The task-level response object is a declared jointly measurable Markov
kernel $(u,w)\mapsto K_u^w$ on $\mathcal U\times\mathfrak Q$. Write
\begin{equation}
  K_u^w(dy)
  :=P_u^\star(dy\mid x^Q,c^Q).
  \label{eq:heterogeneity-response-profile}
\end{equation}
For fixed $u$, the family $w\mapsto K_u^w$ gives the response law over the
declared query family. Equality of two such families means equality of the
declared kernels on $\mathfrak Q$. If the kernel is learned or interpreted only
through a regular conditional law under a design $\Lambda(du,dw)$, its
off-support values are modeling choices rather than identified features. The
$\Lambda$-relative notion below states the corresponding observable condition.

\begin{definition}[Kernel formulation of unit-response homogeneity]
\label{def:unit-response-heterogeneity}
For $U\sim\Pi$, a declared query family $\mathfrak Q$, and the chosen jointly
measurable kernel in \cref{eq:heterogeneity-response-profile}, the task is
\emph{unit-response homogeneous} when there are a single $\Pi$-null set
$N\subseteq\mathcal U$ and a common response kernel $w\mapsto K^w$ such that
$K_u^w=K^w$ for every $u\notin N$ and every $w\in\mathfrak Q$. It is
\emph{unit-response heterogeneous} otherwise.
\end{definition}

Equivalently, define query-relative response equivalence by
\begin{equation}
  u\equiv_{\mathfrak Q}v
  \quad\Longleftrightarrow\quad
  K_u^w=K_v^w
  \ \text{as probability laws for every declared }w\in\mathfrak Q.
  \label{eq:query-response-equivalence}
\end{equation}
Homogeneity says that $\Pi$ is concentrated on one such response-equivalence
class; heterogeneity says that it is not. This quotient is predictive, not
referential. Distinct individuals may satisfy
$u\equiv_{\mathfrak Q}v$ while remaining distinct units. Conversely,
changing $P(X\mid U=u)$ or the frequency with which a unit is sampled does not
by itself establish heterogeneity in $K_u^w$.

For a declared design $\Lambda$ on $\mathcal U\times\mathfrak Q$, call the
family \emph{$\Lambda$-visible homogeneous} when there is a kernel $K^w$ such
that $K_u^w=K^w$ for $\Lambda$-almost every $(u,w)$, and
\emph{$\Lambda$-visible heterogeneous} otherwise. Structural homogeneity in
Definition~\ref{def:unit-response-heterogeneity} implies visible homogeneity
for every compatible design, where compatibility means that the unit marginal
$\Lambda_U$ is absolutely continuous with respect to $\Pi$. The converse can
fail when a design omits queries or units on which response laws differ. This
qualification also makes the notion invariant to changes of a regular
conditional law on a $\Lambda$-null set.

The definition is query-relative for a substantive reason. Two units may be
equivalent on the queries in one study and distinguishable on a larger query
family. A treatment, time point, item, or context can be part of $w$ without
becoming a new unit. The response kernels in
\cref{eq:heterogeneity-response-profile} are observational unless a separate
causal assignment and identification assumptions license an interventional
reading.

\subsection{Response-law model classes}

The family $K_u^w$ supports several qualitatively different model classes. The
restrictions in \cref{tab:heterogeneity-regimes} distinguish their response
objects and transfer assumptions.

\FloatBarrier
\begin{table}[htbp]
\centering
\caption{Response-law cases and model classes inside a unit-explicit task.
Standard sample-only learning is listed separately because its basic object
does not require a declared unit population.}
\label{tab:heterogeneity-regimes}
\footnotesize
\begin{tabular}{@{}>{\raggedright\arraybackslash}p{0.22\linewidth}
                    >{\raggedright\arraybackslash}p{0.30\linewidth}
                    >{\raggedright\arraybackslash}p{0.40\linewidth}@{}}
\toprule
Case or class & Restriction & Meaning \\
\midrule
sample-only & no persistent unit is declared & Learns a row-level law such as
$P(dy\mid x)$; it makes no direct statement about persistent individuals. \\
homogeneous unit extension & $K_u^w=K^w$ for $\Pi$-almost every $u$ & After persistent
units are declared, unit identity adds no response information at the
queries in $\mathfrak Q$. \\
pure-unit, input-invariant & $K_u^w=K_u$ while $K_u$ varies with $u$ & Outcomes
vary across units while remaining invariant to the supplied query. \\
structured unit-responsive class & $K_u^w=K^w(\eta,\zeta_u)$; simple
relation in a token, linear running instance
(\cref{def:linear-shared-form}) & Both query and unit may matter. The
running instance is a linear predictor
$\alpha_\theta+\langle\psi_\theta,Z\rangle$ in the unit token $Z$;
other declared simple maps of $Z$ instantiate the same class.
$\psi_\theta$ may be nonlinear in the query $w$. \\
saturated unit-responsive class & $(u,w)\mapsto K_u^w$ is otherwise
unrestricted across units & The class permits one arbitrary response family per
unit, including homogeneous families; unseen-unit learning requires additional
structure or information. \\
\bottomrule
\end{tabular}
\end{table}

The final two rows describe model classes that permit heterogeneity, not a
claim that every member is heterogeneous. Whether the realized response family
is homogeneous or heterogeneous is still decided by
Definition~\ref{def:unit-response-heterogeneity}, or by its
$\Lambda$-visible counterpart for a deployment-limited claim.

Standard supervised learning does not, by its basic sample-indexed notation
\citep{bishop2006pattern,hastie2009elements,
shalevshwartz2014understanding}, specify a population of persistent units over
which homogeneity could be assessed. A homogeneous
specialization results after those units are declared and the second row of
\cref{tab:heterogeneity-regimes} is imposed. The exact marginal collapse in
\cref{prop:single-row-collapse} is a different statement: heterogeneous
fixed-unit laws may integrate to the same row-level predictive law.

\subsection{Shared Structure and Learnability}

Definition~\ref{def:unit-response-heterogeneity} determines whether the response
family varies with $u$ but does not specify how the families across units are
related. Statistical learnability therefore depends on a declared structural
restriction of the form
\begin{equation}
  (u,w)\longmapsto K_u^w
  \quad\text{belongs to a declared structured class of kernels},
  \label{eq:shared-structure-class}
\end{equation}
which may encode shared parameters, a hierarchy, low-rank or smooth variation,
a shared representation with unit-specific heads, a finite set of response
types, or another coupling across units. These are alternative structural
assumptions rather than additions to the definition of heterogeneity.

A saturated class of measurable unit-to-law maps permits heterogeneity but does
not by itself support transfer from observed units to an unseen unit. A
learnability analysis specifies which response structure is shared, which part
varies, what evidence constrains the varying part, and the deployment population
and query family under which that structure is testable. Under
\cref{hyp:shared-form-token} the
shared restriction is the pair $(T_\phi,R_\theta)$, with a simple relation in
the token as the structured class and
\cref{def:linear-shared-form} as the running instance: $R_\theta$ depends on $z$
only through a declared low-complexity map, linearly in finite dimension or
through another declared simple map of $z$. Random effects and
multitask representations provide two established forms of such structure
\citep{gelman2006data,caruana1997multitask,baxter2000model}; a full learnability
analysis must additionally specify the loss, sampling regime, model complexity,
and learning rule \citep{shalevshwartz2014understanding}.

\subsection{Oracle Value Under Proper Scoring Rules}

The definition also has a decision-theoretic reading. Let
$\Lambda(du,dw)$ be a declared deployment design over units and queries, let
$Y\mid(U=u,W=w)\sim K_u^w$, and suppose a regular conditional
$\Lambda(du\mid w)$ has been fixed. Define the pooled Bayes law
\begin{equation}
  \overline K^w(dy)
  :=\int_{\mathcal U}K_u^w(dy)\,\Lambda(du\mid w).
  \label{eq:pooled-bayes-law}
\end{equation}
Let the admissible reports range over a class $\mathcal P$ of probability laws
that contains $K_u^w$ for $\Lambda$-almost every $(u,w)$ and is closed under
the displayed mixtures, so that $\overline K^w\in\mathcal P$. Let $S(P,y)$ be
a strictly proper predictive loss on $\mathcal P$, with smaller values better,
and let
$D_S(P,Q):=E_{Y\sim P}[S(Q,Y)-S(P,Y)]$ be its associated nonnegative regret
divergence \citep{gneiting2007strictly}.

\begin{proposition}[Oracle heterogeneity gap under a proper score]
\label{prop:proper-score-heterogeneity-gap}
Assume the displayed expectations are finite. The Bayes-risk difference
between the best predictor that observes $(U,W)$ and the best predictor that
observes only $W$ is
\begin{align}
  R_{\rm pool}-R_{\rm Unit}
  &:=E\!\left[S(\overline K^W,Y)-S(K_U^W,Y)\right] \\
  &=E_{(U,W)\sim\Lambda}
    \left[D_S\!\left(K_U^W,\overline K^W\right)\right]
  \geq 0.
  \label{eq:proper-score-heterogeneity-gap}
\end{align}
Under strict propriety, equality holds if and only if
$K_U^W=\overline K^W$ for $\Lambda$-almost every $(U,W)$. For logarithmic
loss, the gap is $I_\Lambda(U;Y\mid W)$ whenever the conditional mutual
information is well defined.
\end{proposition}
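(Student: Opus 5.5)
The argument conditions on $(U,W)$ and uses the tower property. Under $\Lambda$ with $Y\mid(U,W)\sim K_U^W$, the quantity $S(\overline K^W,Y)-S(K_U^W,Y)$ is integrable because both expectations are assumed finite. Its conditional expectation given $(U,W)=(u,w)$ is
\[
E_{Y\sim K_u^w}\bigl[S(\overline K^w,Y)-S(K_u^w,Y)\bigr]=D_S\bigl(K_u^w,\overline K^w\bigr),
\]
which is exactly the regret divergence with $P=K_u^w$ and $Q=\overline K^w$. Both laws lie in $\mathcal P$ for $\Lambda$-almost every $(u,w)$ by the closure assumption. Integrating over $\Lambda$ then gives \eqref{eq:proper-score-heterogeneity-gap}. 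Nonnegativity is pointwise, by propriety: $D_S(P,Q)\ge0$.

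To interpret the two terms as Bayes risks, I would add two short steps.

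\textbf{Oracle term.} For any predictor $q(u,w)\in\mathcal P$ using $(U,W)$, conditioning gives
\[
E[S(q,Y)]-E[S(K_U^W,Y)]=E[D_S(K_U^W,q)]\ge0,
\]
so $K_U^W$ attains $R_{\rm Unit}$.

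\textbf{Pooled term.} For a predictor $q(w)$ using only $W$, I would condition on $W$ alone. The conditional law of $Y$ given $W=w$ is $\overline K^w$: integrating $K_u^w$ against the fixed regular conditional $\Lambda(du\mid w)$ gives the disintegration of the joint law. Therefore
\[
E[S(q(W),Y)\mid W=w]-E[S(\overline K^w,Y)\mid W=w]=D_S(\overline K^w,q(w))\ge0,
\]
so $\overline K^W$ attains $R_{\rm pool}$.

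\textbf{Equality case.} Strict propriety means $D_S(P,Q)=0$ iff $P=Q$. A nonnegative integrand has zero integral iff it vanishes $\Lambda$-a.e., so the gap is zero iff $K_U^W=\overline K^W$ $\Lambda$-a.e. It is worth noting that this is exactly $\Lambda$-visible homogeneity. Indeed, if $K_u^w=K^w$ a.e., then mixing over $\Lambda(du\mid w)$ forces $\overline K^w=K^w$ for $\Lambda_W$-a.e. $w$.

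\textbf{Log loss.} Take $S(P,y)=-\log p(y)$ with densities relative to a common reference measure. Then $D_S$ is the KL divergence, and
\[
E\bigl[D_{\rm KL}(K_U^W\,\|\,\overline K^W)\bigr]=E\!\left[\log\frac{k_U^W(Y)}{\bar k^W(Y)}\right]=I_\Lambda(U;Y\mid W),
\]
since $\bar k^W$ is the conditional density of $Y$ given $W$. This matches \eqref{eq:main-oracle-unit-value} in \cref{prop:value-access-decomposition}.

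\textbf{Main obstacle.} The only delicate step is measure-theoretic bookkeeping. I would handle it as follows:
\begin{itemize}
\item Joint measurability of $(u,w,y)\mapsto S(\cdot,y)$ composed with the kernels is needed so that the conditional expectations are well defined.
\item The difference of the two risks has to be justified as integrable. I would get this from the finiteness assumption, not from splitting expectations of possibly infinite terms.
\item The version of $\Lambda(du\mid w)$ matters only on $\Lambda_W$-null sets, so the statement does not depend on the choice of version.
\end{itemize}
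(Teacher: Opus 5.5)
Your proposal is correct and follows the same route as the paper's proof. You condition on $(U,W)=(u,w)$, identify the conditional excess loss as the regret $D_S(K_u^w,\overline K^w)$, and integrate against $\Lambda$; then strict propriety gives the equality case and the Kullback--Leibler form of the log-score regret gives the mutual-information identity. Your extra check that $K_U^W$ and $\overline K^W$ actually attain the two Bayes risks (the latter because $\overline K^w$ is the conditional law of $Y$ given $W=w$) supplies a step the paper's proof leaves implicit.
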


\begin{proof}
Condition on $(U,W)=(u,w)$. By the definition of the scoring-rule regret, the
conditional expected excess loss from reporting $\overline K^w$ instead of the
true $K_u^w$ is $D_S(K_u^w,\overline K^w)$. Integrating with respect to
$\Lambda$ proves the identity and nonnegativity. Strict propriety makes the
regret zero exactly when the two conditional laws agree almost surely. Under
logarithmic loss the regret is the Kullback--Leibler divergence, whose
conditional expectation is $I_\Lambda(U;Y\mid W)$.
\end{proof}

This proposition applies the classical proper-scoring-rule identity to give an
exact, query- and deployment-relative magnitude to $\Lambda$-visible response
heterogeneity.
It is an oracle comparison and does not establish that $U$ is observed,
identifiable, causally useful, or learnable from finite selectively observed
events.

\FloatBarrier

\section{DiscoSCM as a Causal Specialization of the Unit Primitive}
\label{app:discoscm-specialization}

DiscoSCM, introduced by \citet{gong2024discoscm}, provides a substantive causal
specialization of the unit primitive. This relation is stronger than a shared
use of the letter $U$: DiscoSCM separates the selected individual from
event-level exogenous variation, conditions structural causal mechanisms on the
selected unit, and derives intervention and counterfactual quantities within
that unit-conditioned model. At the same time, the causal and cross-world
assumptions of DiscoSCM are additional structure; they do not follow from the unit primitive
alone.

\subsection{From a Response-Law Family to Unit-Conditioned Structural Mechanisms}

The generic supervised specialization in this paper begins with a population
law $U\sim\Pi$ and a unit-conditioned response family
$u\mapsto K_u^w(dy)$. DiscoSCM refines the response family into a structural
causal model
\begin{equation}
  \mathcal M^{\rm D}
  =\langle U,\mathbf E,\mathbf V,\mathcal F\rangle,
  \qquad
  V_j\leftarrow f_j(\operatorname{pa}_j,E_j;U),
  \label{eq:discoscm-unit-structural-model}
\end{equation}
where $U=u$ selects the individual, $\mathbf E$ contains exogenous variables,
$\mathbf V$ contains endogenous variables, and $\mathcal F$ is a family of
unit-conditioned structural assignments. The DiscoSCM formulation assumes
$U\perp\!\!\!\perp\mathbf E$ in its basic construction. This assumption is
specific to that construction; the general unit primitive does not require
independence between unit selection and every source of event variation.

For a treatment or supplied cause $X=x$ and outcome $Y$, suppressing other
parents for clarity, the structural assignment induces the fixed-unit response
kernel
\begin{equation}
  K_{u,\mathrm{str}}^{x}(B)
  :=P\!\left(f_Y(x,E_Y;u)\in B\right).
  \label{eq:discoscm-induced-response-kernel}
\end{equation}
Thus $u\mapsto K_{u,\mathrm{str}}^x$ is a structurally induced
unit-conditioned response family, while the mechanisms
$u\mapsto f_Y(\cdot,\cdot;u)$ provide its richer causal representation. It is
not identified with the observational conditional $P_u^\star(dy\mid x)$
without additional conditions linking observational and interventional laws.
A unit can alter the baseline response, the
effect of treatment, the outcome noise law, or a larger causal mechanism. The
shared causal graph and the form of $\mathcal F$ then provide one possible
answer to the shared-structure question posed in
\cref{sec:response-law-constraints}.

The separation between $U$ and $\mathbf E$ is the central correspondence. A
realized event is generated by both an individual value $u$ and an exogenous
realization $\mathbf e$. Holding $u$ fixed preserves the referential individual;
it does not freeze the event realization. In the notation of this paper,
DiscoSCM therefore instantiates the distinction between which-individual
variation and the stochastic variation retained by a fixed-unit kernel.

\begin{table}[htbp]
\centering
\caption{Object-level correspondence between the unit formalism and DiscoSCM.
The final column records structure contributed by the causal specialization.}
\label{tab:discoscm-unit-correspondence}
\footnotesize
\begin{tabular}{@{}>{\raggedright\arraybackslash}p{0.25\linewidth}
                    >{\raggedright\arraybackslash}p{0.27\linewidth}
                    >{\raggedright\arraybackslash}p{0.38\linewidth}@{}}
\toprule
Unit formalism & DiscoSCM object & Additional causal content \\
\midrule
population law $U\sim\Pi$ & unit selection variable $U$ with law $P(u)$ &
Population over individuals participating in a structural causal system. \\
realized unit $u$ & individual retained across factual and counterfactual worlds
& The same referent appears in every structural assignment
$f_j(\cdot,\cdot;u)$. \\
fixed-unit event variation & exogenous variables $\mathbf E$ & A realized event
depends on both $(u,\mathbf e)$; fixing $u$ need not make the outcome
deterministic. \\
response law $K_u^w$ & law induced by
$V_j\leftarrow f_j(\operatorname{pa}_j,E_j;u)$ & The response is generated by a
directed mechanism rather than specified only as a conditional kernel. \\
response query $x^Q$ & intervention value $x$ in $do(X=x)$ & Reading a query as
an intervention requires the causal graph, structural assignments, and an
intervention semantics. \\
factual evidence $\mathcal O$ & an observed factual trace & Evidence updates
unit information about the already-realized unit; DiscoSCM represents that
update as an identity posterior, which is one specialization of a formed
token. The trace may also contain variables with separately declared direct
response roles. \\
conditional unit law $P(du\mid\mathcal O)$ & DiscoSCM unit posterior
$P(du\mid\mathcal O)$ & DiscoSCM's population-level Layer-3 reduction mixes
against the exact identity posterior. That is DiscoSCM's abduction object, not
the definition of unit abduction in this paper. A learned $Q_\phi$ approximates
that identity law when the protocol uses the identity-belief specialization. \\
\bottomrule
\end{tabular}
\end{table}

\subsection{The additional causal and cross-world structure}

The unit primitive preserves the referent across a query family but is
observational by default. DiscoSCM adds an intervention operation and a family
of counterfactual exogenous variables $\mathbf E(x)$. For an outcome mechanism,
the resulting fixed-unit counterfactual outcome has the form
\begin{equation}
  Y_u^{d}(x)=f_Y(x,E_Y(x);u).
  \label{eq:discoscm-fixed-unit-counterfactual}
\end{equation}
The superscript $d$ distinguishes this distribution-consistent counterfactual
from a traditional same-noise SCM counterfactual. The defining marginal
restriction is
\begin{equation}
  \mathbf E(x)\ \overset{d}{=}\ \mathbf E,
  \label{eq:discoscm-noise-distribution-consistency}
\end{equation}
which yields, under the model's structural conditions, the fixed-unit
distribution-consistency relation
\begin{equation}
  \mathcal L\!\left(Y_u^{d}(x)\mid X=x,U=u\right)
  =\mathcal L\!\left(Y\mid X=x,U=u\right).
  \label{eq:discoscm-distribution-consistency}
\end{equation}
This equality in distribution replaces the pointwise equality imposed by the
usual consistency relation.

Equation~\eqref{eq:discoscm-noise-distribution-consistency} fixes the marginal
law in each world but does not by itself determine the coupling among
$\mathbf E$, $\mathbf E(x)$, and $\mathbf E(x')$. That coupling is a Layer-3
modeling choice because it determines joint cross-world quantities. Traditional
SCM semantics occupy the same-noise endpoint
$\mathbf E(x)=\mathbf E$ almost surely. The population-valuation result of
\citet{gong2024discoscm} instead uses the additional assumption
$\mathbf E(x)\perp\!\!\!\perp\mathbf E$ for the factual and queried worlds.
Other couplings can share the same Layer-1 and Layer-2 marginals while inducing
different Layer-3 joint distributions. The unit primitive identifies the
referent that is held fixed across these worlds; DiscoSCM specifies how the
remaining cross-world randomness is related.

\subsection{Unit Abduction, causal valuation, and reduction}

Let $\mathcal O$ denote an observed factual trace. The law of total
probability gives the exact decomposition
\begin{align}
  P\!\left(Y^{d}(x)\in dy\mid\mathcal O\right)
  =\int_{\mathcal U}
    &P\!\left(Y^{d}(x)\in dy
      \mid\mathcal O,U=u\right)
    P(du\mid\mathcal O).
  \label{eq:discoscm-exact-unit-decomposition}
\end{align}
Under the DiscoSCM condition that the queried counterfactual noise is
independent of the factual trace after fixing the unit, the first integrand no
longer depends on $\mathcal O$. Writing its fixed-unit counterfactual kernel as
$K_{u,\mathrm{cf}}^x$, the decomposition becomes
\begin{equation}
  P\!\left(Y^{d}(x)\in dy\mid\mathcal O\right)
  =\int_{\mathcal U}K_{u,\mathrm{cf}}^x(dy)
    P(du\mid\mathcal O).
  \label{eq:discoscm-abduction-valuation-reduction}
\end{equation}
This is the causal specialization of the identity-belief composition in
\cref{eq:core-composition}. DiscoSCM abducts a posterior over identity: factual
evidence updates $P(du\mid\mathcal O)$; the causal model evaluates the
counterfactual response law for each fixed $u$; and integration reduces the
unit-specific valuations to a population-level answer.
\citet{gong2024discoscm} call these stages \emph{abduction}, \emph{valuation},
and \emph{reduction}. That identity posterior is one admissible token; it is
not the definition of unit abduction in this paper. The default computational
prediction remains \cref{eq:token-composition}.

There are also two important differences from the generic learner formulation.
First, \cref{eq:discoscm-abduction-valuation-reduction} uses the exact world
identity conditional $P(du\mid\mathcal O)$, whereas this paper's default
abduction object is a formed token $Z^c$, and $Q_\phi(du\mid\mathcal O)$
is reserved for a learner identity belief that need not be calibrated.
Second, the kernel in DiscoSCM is interventional or counterfactual because the
structural model supplies that semantics; the kernel in the general supervised
formulation is observational unless additional causal assumptions are stated.

The stability result in \cref{prop:mixture-stability} applies to this causal
specialization after replacing $K_u$ by $K_{u,\mathrm{cf}}^x$, provided the
conditional independence used in
\cref{eq:discoscm-abduction-valuation-reduction} is stated directly and the
learned counterfactual kernel is measurable. It then separates counterfactual-
kernel approximation error from unit-belief error, with the latter scaled by
the diameter of the true fixed-unit counterfactual laws. This is a stability
statement for a specified causal target; it neither supplies the cross-world
coupling nor identifies the counterfactual kernel from observational data.

\subsection{Scope and Provenance of the Causal Specialization}

DiscoSCM is a \emph{causal theoretical specialization} of the unit primitive.
It declares the persistent unit, replaces a merely predictive response family
with unit-conditioned structural causal mechanisms, and states how factual
evidence and interventional or counterfactual queries interact with that
object. The hierarchy is
\begin{equation}
  \begin{aligned}
  \text{unit primitive}
  &\longrightarrow \text{unit-conditioned structural mechanisms}\\
  &\longrightarrow \text{DiscoSCM cross-world assumptions}\\
  &\longrightarrow \text{causal valuations and identification results}.
  \end{aligned}
  \label{eq:discoscm-specialization-hierarchy}
\end{equation}
The first arrow specializes the learned object; the second adds
distribution-consistency and a cross-world noise law; the third derives
DiscoSCM-specific causal consequences. This ordering also fixes the attribution
of results: the unit primitive supplies the persistent referent and the
population-to-individual decomposition, whereas DiscoSCM supplies the causal
mechanisms, intervention semantics, cross-world coupling assumptions, and the
theorems that depend on them.

Finally, unit abduction should not be conflated with the classical SCM
abduction of a full exogenous-noise realization. In this paper, unit abduction
forms a contextual unit token from factual evidence. That token may be an
identity posterior, but it need not be. DiscoSCM's abduction--valuation--
reduction step uses the identity posterior as its mixing measure, and
deliberately keeps that question separate from the coupling or resampling of
event-level noise across worlds. The two ``abduction'' operations are therefore
distinct even when DiscoSCM is read as a causal specialization of the unit
primitive.

\FloatBarrier

\section{Additional Formal Results}
\label{app:supporting-results}

The following results give technical statements and proofs supporting the
claims in the main text. They are consequences of retaining unit identity
rather than additional primitives. Unless a display is written in token
variables, the proofs use the identity-belief specialization
$Q_\phi(du\mid\mathcal O)$ of \cref{eq:core-composition};
that is the formulation in which value--access, mixture stability, and
single-row collapse are stated. It is not a claim that every tokenizer emits
a law on $\mathcal U$.

\subsection{Proof of the Value--Access Decomposition}
\label{app:value-access-proofs}

\begin{proof}[Proof of \cref{prop:value-access-decomposition}]
The conditional log-score identity gives
\begin{align}
  R_W^\star-R_{U,W}^\star&=I(U;Y^Q\mid W),\\
  R_W^\star-R_{\mathcal O,W}^\star
  &=I(\mathcal O;Y^Q\mid W).
\end{align}
The chain rule expands the same conditional mutual information in two ways:
\begin{align}
  I(U,\mathcal O;Y^Q\mid W)
  &=I(U;Y^Q\mid W)+I(\mathcal O;Y^Q\mid U,W)\\
  &=I(\mathcal O;Y^Q\mid W)
    +I(U;Y^Q\mid\mathcal O,W).
\end{align}
Response sufficiency makes
$I(\mathcal O;Y^Q\mid U,W)=0$, which proves
\cref{eq:value-access-mi-decomposition}. Nonnegativity of conditional mutual
information gives the risk ladder. Response sufficiency also makes
$\mathcal O\to U\to Y^Q$ a conditional Markov chain given $W$.
Conditional data processing gives
$I(\mathcal O;Y^Q\mid W)\leq I(U;\mathcal O\mid W)$, while
\cref{eq:value-access-mi-decomposition} gives the other upper bound in
\cref{eq:value-access-data-processing}.
\end{proof}

The mutual-information decomposition itself requires response sufficiency but
not the external-query condition. The latter is needed when the exact response
conditional is represented using the fixed belief
$P(U\in du\mid\mathcal O)$ rather than
$P(U\in du\mid\mathcal O,W)$.

\subsection{Proof of the End-to-End Approximation Result}
\label{app:end-to-end-approximation-proof}

Assume the unit, evidence, query, and outcome spaces are standard Borel and use
compatible regular-conditional versions. The finite-risk statement below uses
conditional densities with respect to a common sigma-finite reference measure;
the component bound may instead be read in the extended relative-entropy sense.

\begin{proof}[Proof of \cref{prop:end-to-end-approximation}]
Conditioning on $(\mathcal O,W)$, the difference between the learner's
expected log loss and the Bayes log loss is
\begin{equation}
  E\!\left[
    \log\frac{p(Y^Q\mid\mathcal O,W)}
                  {\widehat p_{\theta,\phi}(Y^Q\mid\mathcal O;W)}
    \,\middle|\,\mathcal O,W\right],
\end{equation}
which is
$D_{\rm KL}(M_{\mathcal O,W}\|\widehat M_{\mathcal O,W})$ under
\cref{eq:value-access-response-sufficiency,eq:external-query-contract}.
Averaging proves \cref{eq:learner-predictive-kl}. Adding and subtracting
$R_{\mathcal O,W}^\star$ and applying
\cref{prop:value-access-decomposition} proves
\cref{eq:learner-achieved-value,eq:learner-oracle-decomposition}.

For the component bound, at fixed $(o,w)$ define latent-joint laws
\begin{equation}
  J_{o,w}(du,dy):=P_o(du)K_{u,w}(dy),
  \qquad
  \widehat J_{o,w}(du,dy):=Q_o(du)\widehat K_{u,w}(dy).
  \label{eq:latent-joint-laws}
\end{equation}
The relative-entropy chain rule gives
\begin{align}
  D_{\rm KL}(J_{o,w}\|\widehat J_{o,w})
  &=D_{\rm KL}(P_o\|Q_o)
    +\int D_{\rm KL}(K_{u,w}\|\widehat K_{u,w})P_o(du).
  \label{eq:latent-joint-kl-chain}
\end{align}
The measurable projection $(u,y)\mapsto y$ sends $J_{o,w}$ to $M_{o,w}$
and $\widehat J_{o,w}$ to $\widehat M_{o,w}$. Data processing therefore gives
\begin{equation}
  D_{\rm KL}(M_{o,w}\|\widehat M_{o,w})
  \leq D_{\rm KL}(J_{o,w}\|\widehat J_{o,w}).
  \label{eq:latent-joint-kl-marginal}
\end{equation}
Averaging \cref{eq:latent-joint-kl-chain,eq:latent-joint-kl-marginal} over the
deployment law proves \cref{eq:latent-joint-kl-deployment}.
\end{proof}

This upper bound is a marginal contraction of a latent-joint discrepancy. It
need not be tight: distinct latent decompositions can have identical response
marginals, as the next subsection makes explicit.

\subsection{Single-Row Marginal Collapse and Linked-Pair Separation}
\label{app:single-row-linked-proof}

For compatible versions of the regular conditional kernels, the law of total
probability after conditioning on $X=x$ proves
\cref{eq:exact-rowwise-collapse}, with its $P_X$-almost-everywhere
qualification. For any sample-only conditional law $R(dy\mid x)$, choosing
$K_u^R(dy\mid x):=R(dy\mid x)$ for every $u$ in an unrestricted response-kernel
class returns $R$ after integration against every learner belief. Thus marginal
response fit alone does not select a formed token, a which-unit belief, or a
fixed-unit response decomposition.

\begin{proof}[Proof of \cref{prop:single-row-collapse}]
In world $\mathsf H$,
\begin{equation}
  P_{\mathsf H}(Y=1)
  =\tfrac12(\tfrac14+\tfrac34)=\tfrac12,
\end{equation}
which is also the response probability in world $\mathsf P$. Independence of
the fresh unit draws and their event realizations gives
\cref{eq:single-row-equal-laws} for every $n$.

Let a possibly randomized test $\varphi_n$ declare $\mathsf H$ with
probability $\varphi_n(Y_1,\ldots,Y_n)$ and define
\begin{equation}
  \alpha_n=E_{\mathsf P}[\varphi_n],
  \qquad
  \beta_n=E_{\mathsf H}[1-\varphi_n].
\end{equation}
Equality of the observable laws implies
$E_{\mathsf H}\varphi_n=E_{\mathsf P}\varphi_n$, so
$\alpha_n+\beta_n=1$ and
$\max\{\alpha_n,\beta_n\}\geq1/2$. Equality is attained by randomizing equally
between the two worlds.

For a linked pair, conditional independence gives
\begin{equation}
  \operatorname{Cov}(Y_1,Y_2)=\operatorname{Var}(p_U).
  \label{eq:linked-pair-covariance-general}
\end{equation}
This variance is $1/16$ in world $\mathsf H$ and zero in world $\mathsf P$.
More explicitly, their pair probabilities are
\begin{equation}
\begin{array}{c|cccc}
 & 00 & 01 & 10 & 11\\
\hline
\mathsf H & 5/16 & 3/16 & 3/16 & 5/16\\
\mathsf P & 1/4  & 1/4  & 1/4  & 1/4.
\end{array}
\label{eq:linked-pair-probability-table}
\end{equation}
If $M$ linked pairs are sampled independently from independently drawn units,
define
\begin{equation}
  \widehat a_M
  :=\frac1M\sum_{j=1}^M\mathbf 1\{Y_{j1}=Y_{j2}\}.
  \label{eq:linked-pair-agreement-rate}
\end{equation}
The test that declares $\mathsf H$ when $\widehat a_M>9/16$ is consistent;
Hoeffding's inequality bounds each error by $\exp(-M/128)$. Finally,
$P(Y_1=1)=E[p_U]$ and $P(Y_1=1,Y_2=1)=E[p_U^2]$, proving
\cref{eq:linked-pair-variance-identification}.
\end{proof}

The full mixing law is not identified by these two moments without further
restrictions. For example,
\begin{equation}
  \tfrac12\delta_{1/4}+\tfrac12\delta_{3/4}
  \quad\text{and}\quad
  \tfrac18\delta_0+\tfrac34\delta_{1/2}+\tfrac18\delta_1
  \label{eq:same-two-moments-mixing-laws}
\end{equation}
are distinct laws for $p_U$ with the same first moment $1/2$ and second moment
$5/16$.

Trusted same-unit linkage and the conditional product law are essential to this
positive statement. Arbitrary within-unit dependence, linkage error,
response-relevant temporal state, or informative observation can invalidate
the displayed separation. The result identifies neither the full mixing law nor
membership of a realized unit in either response class. It therefore does not
contradict the need for structured mixture assumptions in classical positive
identifiability results \citep{teicher1963identifiability}.

\subsection{Attribution information and response dependence are independent}
\label{app:two-attribution-response-axes}

For chosen regular-conditional versions, evidence is uninformative about the
unit exactly when
\begin{equation}
  U\perp\!\!\!\perp\mathcal O
  \quad\Longleftrightarrow\quad
  P(U\in\cdot\mid\mathcal O)=\Pi(\cdot)
  \quad\text{almost surely}.
  \label{eq:appendix-evidence-collapse}
\end{equation}
This statement concerns the conditional law under the data-generating
distribution. It constrains a formed token only insofar as that token is
required to carry identity information, and it constrains a learner belief
$Q_\phi$ only when that belief is calibrated to the declared identity target.

Independently, suppose the response family is unit-homogeneous at a declared
query $w$: $K_u^w=K^w$ for $\Pi$-almost every $u$. Then every learner belief
with $Q_\phi(\cdot\mid\mathcal O)\ll\Pi$ satisfies
\begin{equation}
  \int_{\mathcal U}K_u^w\,
    Q_\phi(du\mid\mathcal O)=K^w.
  \label{eq:appendix-response-collapse}
\end{equation}
Neither condition implies the other. Informative attribution can coexist with
a unit-homogeneous response, and unit-heterogeneous responses can coexist with
uninformative evidence. Moreover, cancellation in one particular mixture does
not by itself establish response homogeneity. These are two collapse
conditions for two different learner formulations, distinct again from the single-row
marginal collapse in \cref{prop:single-row-collapse}.

\subsection{Proof and qualifications for mixture stability}
\label{app:mixture-stability-proof}

Fix $o$, $w$, and $\mathcal U_o$ as in \cref{prop:mixture-stability}, and
suppress $w$ by writing $K_u:=K_{u,w}$ and
$\widehat K_u:=\widehat K_{u,w}$. Assume the unit and outcome
spaces are standard Borel, $K$ and $\widehat K$ are measurable Markov
kernels on the chosen versions, and the displayed total-variation integrand is
measurable. These conditions ensure that all mixtures and integrals below are
well-defined. The common full-mass restriction ensures that replacing
$\mathcal U$ by $\mathcal U_o$ in the diameter does not discard mixture mass.

\begin{proof}[Proof of \cref{prop:mixture-stability}]
The triangle inequality gives
\begin{align}
d_{\rm TV}(Q_o\widehat K,P_oK)
&\leq d_{\rm TV}(Q_o\widehat K,Q_oK)
     +d_{\rm TV}(Q_oK,P_oK),
\label{eq:mixture-stability-triangle}
\end{align}
where, for example,
$Q_oK=\int_{\mathcal U}K_u Q_o(du)$. For the first term, the definition of
total variation and the triangle inequality for integrals yield
\begin{align}
d_{\rm TV}(Q_o\widehat K,Q_oK)
&\leq
\int_{\mathcal U}
d_{\rm TV}(\widehat K_u,K_u)Q_o(du).
\label{eq:mixture-stability-kernel-part}
\end{align}

For the second term, let $\Delta_o=Q_o-P_o$ and
$t=d_{\rm TV}(Q_o,P_o)$. If $t=0$ there is nothing to prove. Otherwise the
Jordan decomposition of the zero-mass signed measure $\Delta_o$ can be written as
$\Delta_o=t(\alpha-\beta)$ for probability measures $\alpha$ and $\beta$ supported
on $\mathcal U_o$. For any measurable outcome event $A$,
\begin{align}
\left|(\alpha K)(A)-(\beta K)(A)\right|
&=\left|\int\!\!\int
  \bigl[K_u(A)-K_v(A)\bigr]\alpha(du)\beta(dv)\right| \\
&\leq \sup_{u,v\in\mathcal U_o}d_{\rm TV}(K_u,K_v).
\end{align}
Taking the supremum over $A$ and multiplying by $t$ gives
\begin{equation}
d_{\rm TV}(Q_oK,P_oK)
\leq
\left[\sup_{u,v\in\mathcal U_o}d_{\rm TV}(K_u,K_v)\right]
d_{\rm TV}(Q_o,P_o).
\label{eq:mixture-stability-belief-part}
\end{equation}
Combining \cref{eq:mixture-stability-triangle,eq:mixture-stability-kernel-part,eq:mixture-stability-belief-part}
proves
\cref{eq:mixture-stability}.
\end{proof}

The diameter term is response-input-specific. It is zero exactly when the selected
versions of the true response law agree across the units included in the
supremum. If $Q_o$ assigns mass outside the region on which the true kernel is
scientifically specified, that is a support failure rather than a small
attribution error; the proposition does not repair it. For unbounded losses,
including unrestricted log loss, total-variation stability alone does not
provide a finite excess-risk bound without additional boundedness or density
conditions.

\subsection{Oracle, deployed, row-weighted, and unit-weighted risks}

If the true $u$ is supplied to the response predictor, the population-first
oracle risk separates which-individual and fixed-individual variation:
\begin{equation}
  R_{\Pi}^{\mathrm{oracle}}(f)
  =\int_{\mathcal U}
    \left[\int \ell\!\left(y,f(x;u)\right)K_u(dx,dy)\right]
    \Pi(du).
  \label{eq:appendix-population-risk}
\end{equation}
The outer integral varies $U$; the inner integral retains exogenous/event
variation at fixed $u$.  This is a diagnostic for the response layer, not the
deployed risk of a learner that knows only a formed token, or only $Q_\phi$
in the identity-belief specialization.

Let a declared deployment experiment generate factual evidence
$\mathcal O$, a response query $X^Q$, any explicit response context $C^Q$,
and the target response $Y^Q$. Under log loss, the default deployed risk of
the pair $(T_\phi,R_\theta)$ is
\begin{equation}
  R_{\mathrm{dep}}(\theta,\phi)
  =
  E_{P_{\mathrm{dep}}}\!\left[
    -\log\!\left\{
      \int_{\mathcal Z}
      r_\theta(Y^Q\mid X^Q,C^Q,z)\,
      T_\phi(dz\mid\mathcal O,C^Q)
    \right\}
  \right],
  \label{eq:appendix-deployed-risk-token}
\end{equation}
when a density $r_\theta$ of $R_\theta$ exists. In the identity-belief
specialization used by the theorems, the same experiment scores
\begin{equation}
  R_{\mathrm{dep}}(\theta,\phi)
  =
  E_{P_{\mathrm{dep}}}\!\left[
    -\log\!\left\{
      \int_{\mathcal U}
      p_\theta(Y^Q\mid X^Q,C^Q,u)
      Q_\phi(du\mid\mathcal O)
    \right\}
  \right].
  \label{eq:appendix-deployed-risk}
\end{equation}
Other proper scores or decision losses may replace log loss. The deployment
experiment and evidence cutoff are part of the risk definition.
Subpopulation-law quality, oracle fixed-$u$ response quality, and deployed
marginalized quality are different quantities; the last alone need not identify
the first two. A random-row design may also induce a size-biased unit law, so its
empirical risk need not estimate either declared population target.

\subsection{Repeated-unit weighting and splitting}
\label{app:repeated-unit-evaluation-proofs}

In an observed-attribution dataset, let
$M=|\mathcal U_{\mathcal D}|$. For each observed unit define
\begin{equation}
  I(u)=\{i:u_i=u\},
  \qquad
  m(u)=|I(u)|.
  \label{eq:appendix-unit-count}
\end{equation}
For sample losses $\ell_i$ define the mean loss of unit $u$ by
$\bar\ell(u)=m(u)^{-1}\sum_{i\in I(u)}\ell_i$. The row-average and
unit-average risks are
\begin{equation}
  \widehat R_{\mathrm{row}}
  =\frac{1}{N}\sum_{i=1}^{N}\ell_i
  =\sum_{u\in\mathcal U_{\mathcal D}}\frac{m(u)}{N}\bar\ell(u),
  \qquad
  \widehat R_{\mathrm{unit}}
  =\frac{1}{M}\sum_{u\in\mathcal U_{\mathcal D}}\bar\ell(u).
  \label{eq:appendix-risks}
\end{equation}
\begin{proof}[Proof of \cref{prop:risk-weighting}]
The two expressions are linear in $(\bar\ell(u))_{u\in\mathcal U_{\mathcal D}}$,
so they agree for every such vector exactly when their coefficients agree for
every $u$, that is, when $m(u)/N=1/M$.
\end{proof}

Neither objective is universally
correct; the deployment estimand determines which weighting is appropriate.

\begin{proof}[Proof of \cref{prop:split}]
The two excluded events are that all $m(u)$ records enter training and that all
enter test; they are disjoint and have probabilities $p^{m(u)}$ and
$(1-p)^{m(u)}$.
\end{proof}

A record-wise split can therefore mix new observations of known units with any
new-unit cases. An unseen-unit generalization estimand uses a test construction
that is explicitly unit-disjoint from training, whether by a whole-unit split,
an external new-unit cohort, or another declared protocol.

\section{Observability Conditions}
\label{app:implementation-boundaries}

\paragraph{Component targets and training regimes.}
The world target $P_u^\star$ and the learner pair $(T_\phi,R_\theta)$ remain
distinct. Separate token supervision, oracle-attribution response training, and
marginalized end-to-end training are different identification regimes. A
which-unit belief has its own target only when the protocol asks it to
approximate $P(U\in du\mid\mathcal O)$. Under trusted direct access there is
no identity model to estimate: learning the lookup row
$Z_{u(k)}^c=z_\theta(k)$ fits a token inside $R_\theta$, not $Q_\phi$. Any
history-derived state used at answer time must appear in the tokenizer context,
in $c^Q$, or inside the shared form; otherwise response sufficiency is not
justified. In \cref{sec:consequences} the complete response input is written
$W$, not $Z$, so that it is not confused with the unit token $Z_u^c$.

\paragraph{Support and observational equivalence.}
The population law may support an individual value $u^\star$ even when no
training record is attributed to it. Support alone does not concentrate
$P(U\in du\mid\mathcal O)$ at $u^\star$; the available event information may
instead imply a diffuse token, a population fallback, or abstention. Likewise,
two distinct individual values may induce the same response law for every query
and context admitted by a study. They are observationally indistinguishable
under that study while remaining distinct units.

\paragraph{Fixed evidence and shared-unit bundles.}
The task declaration specifies the unit boundary, the events linked to one fixed
individual, the answer-time evidence cutoff, and the query family. Within a
declared same-unit query family $\mathfrak Q$, the same $u$ and the same factual
evidence $\mathcal O$ are retained while $x^Q\in\mathfrak Q$ varies. A new
factual event may update the formed token, whereas comparing alternative
response queries uses the token formed at the fixed evidence cutoff. Shared
attribution of several rows to one unit does not by itself imply that those
rows are conditionally independent given $U$.

\paragraph{The fixed-individual kernel retains exogenous variation.}
The kernel $P_u^\star(dy\mid x^Q,c^Q)$ may remain stochastic after
$U=u$ is fixed. Residual within-unit variation is not which-unit variation.

\section{Evaluation Checklist}
\label{app:evaluation-checklist}

A reproducible evaluation should pre-specify:
\begin{enumerate}[leftmargin=*,itemsep=1pt,topsep=3pt]
  \item the unit population and the span over which unit identity persists;
  \item the factual-evidence cutoff, response query and context, and current
  target;
  \item the access regime---direct access or unit abduction---and what
  attribution truth is available for evaluation;
  \item the evaluated target: fixed-unit response, deployed marginalized
  prediction, or a unit-level estimand;
  \item whether the split targets known-unit/new-event or new-unit
  generalization;
  \item a matched unit-omitting baseline and any negative control required by the
  claim.
\end{enumerate}

A learnable unit-conditioned specification additionally names:
\begin{enumerate}[leftmargin=*,itemsep=1pt,topsep=3pt]
  \item what is shared across units, here the form $R_\theta$, a simple
        relation in the token, with a linear predictor as the running instance
        (\cref{def:linear-shared-form});
  \item which properties of the query--response relation vary with the unit,
        and are therefore expressed by $Z_u^c$;
  \item which properties remain invariant across units;
  \item how finitely many same-unit observations constrain that unit's token
        and hence its response law;
  \item how the shared form and token space support prediction for a unit not
        previously observed; and
  \item how evaluation distinguishes a learned unit-dependent relationship
        from memorization of an identifier.
\end{enumerate}

A product likelihood adds a conditional-factorization assumption. The response
law is observational by default; an interventional reading requires assignment
and identification assumptions. Pre-answer evidence excludes current targets
and unavailable post-query measurements.

%% file: references.bib
@article{hurlbert1984pseudoreplication,
  author  = {Hurlbert, Stuart H.},
  title   = {Pseudoreplication and the Design of Ecological Field Experiments},
  journal = {Ecological Monographs},
  volume  = {54},
  number  = {2},
  pages   = {187--211},
  year    = {1984},
  doi     = {10.2307/1942661}
}

@article{laird1982random,
  author  = {Laird, Nan M. and Ware, James H.},
  title   = {Random-Effects Models for Longitudinal Data},
  journal = {Biometrics},
  volume  = {38},
  number  = {4},
  pages   = {963--974},
  year    = {1982},
  doi     = {10.2307/2529876}
}

@article{caruana1997multitask,
  author  = {Caruana, Rich},
  title   = {Multitask Learning},
  journal = {Machine Learning},
  volume  = {28},
  pages   = {41--75},
  year    = {1997},
  doi     = {10.1023/A:1007379606734}
}

@article{baxter2000model,
  author  = {Baxter, Jonathan},
  title   = {A Model of Inductive Bias Learning},
  journal = {Journal of Artificial Intelligence Research},
  volume  = {12},
  pages   = {149--198},
  year    = {2000},
  doi     = {10.1613/jair.731}
}

@book{bishop2006pattern,
  author    = {Bishop, Christopher M.},
  title     = {Pattern Recognition and Machine Learning},
  series    = {Information Science and Statistics},
  publisher = {Springer},
  address   = {New York, NY},
  year      = {2006},
  isbn      = {978-0-387-31073-2},
  url       = {https://link.springer.com/book/9780387310732}
}

@book{hastie2009elements,
  author    = {Hastie, Trevor and Tibshirani, Robert and Friedman, Jerome},
  title     = {The Elements of Statistical Learning: Data Mining, Inference, and Prediction},
  edition   = {2},
  series    = {Springer Series in Statistics},
  publisher = {Springer},
  address   = {New York, NY},
  year      = {2009},
  doi       = {10.1007/978-0-387-84858-7},
  isbn      = {978-0-387-84857-0}
}

@book{shalevshwartz2014understanding,
  author    = {Shalev-Shwartz, Shai and Ben-David, Shai},
  title     = {Understanding Machine Learning: From Theory to Algorithms},
  publisher = {Cambridge University Press},
  address   = {Cambridge},
  year      = {2014},
  doi       = {10.1017/CBO9781107298019},
  isbn      = {978-1-107-05713-5}
}

@book{murphy2012machine,
  author    = {Murphy, Kevin P.},
  title     = {Machine Learning: A Probabilistic Perspective},
  series    = {Adaptive Computation and Machine Learning},
  publisher = {MIT Press},
  address   = {Cambridge, MA},
  year      = {2012},
  isbn      = {978-0-262-01802-9},
  url       = {https://www.cs.ubc.ca/~murphyk/MLbook/}
}

@article{bengio2013representation,
  author  = {Bengio, Yoshua and Courville, Aaron and Vincent, Pascal},
  title   = {Representation Learning: A Review and New Perspectives},
  journal = {IEEE Transactions on Pattern Analysis and Machine Intelligence},
  volume  = {35},
  number  = {8},
  pages   = {1798--1828},
  year    = {2013},
  doi     = {10.1109/TPAMI.2013.50}
}

@article{dempster1977maximum,
  author  = {Dempster, Arthur P. and Laird, Nan M. and Rubin, Donald B.},
  title   = {Maximum Likelihood from Incomplete Data via the {EM} Algorithm},
  journal = {Journal of the Royal Statistical Society. Series B (Methodological)},
  volume  = {39},
  number  = {1},
  pages   = {1--22},
  year    = {1977},
  doi     = {10.1111/j.2517-6161.1977.tb01600.x}
}

@article{teicher1963identifiability,
  author  = {Teicher, Henry},
  title   = {Identifiability of Finite Mixtures},
  journal = {The Annals of Mathematical Statistics},
  volume  = {34},
  number  = {4},
  pages   = {1265--1269},
  year    = {1963},
  doi     = {10.1214/aoms/1177703862}
}

@article{shimodaira2000covariate,
  author  = {Shimodaira, Hidetoshi},
  title   = {Improving Predictive Inference under Covariate Shift by Weighting the Log-Likelihood Function},
  journal = {Journal of Statistical Planning and Inference},
  volume  = {90},
  number  = {2},
  pages   = {227--244},
  year    = {2000},
  doi     = {10.1016/S0378-3758(00)00115-4}
}

@article{bendavid2010theory,
  author  = {Ben-David, Shai and Blitzer, John and Crammer, Koby and Kulesza, Alex and Pereira, Fernando and Wortman Vaughan, Jennifer},
  title   = {A Theory of Learning from Different Domains},
  journal = {Machine Learning},
  volume  = {79},
  pages   = {151--175},
  year    = {2010},
  doi     = {10.1007/s10994-009-5152-4}
}

@article{peters2016invariant,
  author  = {Peters, Jonas and B{\"u}hlmann, Peter and Meinshausen, Nicolai},
  title   = {Causal Inference Using Invariant Prediction: Identification and Confidence Intervals},
  journal = {Journal of the Royal Statistical Society: Series B (Statistical Methodology)},
  volume  = {78},
  number  = {5},
  pages   = {947--1012},
  year    = {2016},
  doi     = {10.1111/rssb.12167}
}

@article{gneiting2007strictly,
  author  = {Gneiting, Tilmann and Raftery, Adrian E.},
  title   = {Strictly Proper Scoring Rules, Prediction, and Estimation},
  journal = {Journal of the American Statistical Association},
  volume  = {102},
  number  = {477},
  pages   = {359--378},
  year    = {2007},
  doi     = {10.1198/016214506000001437}
}

@book{gelman2006data,
  author    = {Gelman, Andrew and Hill, Jennifer},
  title     = {Data Analysis Using Regression and Multilevel/Hierarchical Models},
  publisher = {Cambridge University Press},
  year      = {2006},
  doi       = {10.1017/CBO9780511790942}
}

@article{saeb2017need,
  author  = {Saeb, Sohrab and Lonini, Luca and Jayaraman, Arun and Mohr, David C. and Kording, Konrad P.},
  title   = {The Need to Approximate the Use-Case in Clinical Machine Learning},
  journal = {GigaScience},
  volume  = {6},
  number  = {5},
  pages   = {1--9},
  year    = {2017},
  doi     = {10.1093/gigascience/gix019}
}

@article{roberts2017crossvalidation,
  author  = {Roberts, David R. and Bahn, Volker and Ciuti, Simone and Boyce, Mark S. and Elith, Jane and Guillera-Arroita, Gurutzeta and Hauenstein, Severin and Lahoz-Monfort, Jos\'e J. and Schr\"oder, Boris and Thuiller, Wilfried and Warton, David I. and Wintle, Brendan A. and Hartig, Florian and Dormann, Carsten F.},
  title   = {Cross-Validation Strategies for Data with Temporal, Spatial, Hierarchical, or Phylogenetic Structure},
  journal = {Ecography},
  volume  = {40},
  number  = {8},
  pages   = {913--929},
  year    = {2017},
  doi     = {10.1111/ecog.02881}
}

@article{jacobs1991adaptive,
  author  = {Jacobs, Robert A. and Jordan, Michael I. and Nowlan, Steven J. and Hinton, Geoffrey E.},
  title   = {Adaptive Mixtures of Local Experts},
  journal = {Neural Computation},
  volume  = {3},
  number  = {1},
  pages   = {79--87},
  year    = {1991},
  doi     = {10.1162/neco.1991.3.1.79},
  url     = {https://www.cs.toronto.edu/~hinton/absps/jjnh91.pdf}
}

@inproceedings{kingma2014autoencoding,
  author    = {Kingma, Diederik P. and Welling, Max},
  title     = {Auto-Encoding Variational {Bayes}},
  booktitle = {International Conference on Learning Representations},
  year      = {2014},
  url       = {https://arxiv.org/abs/1312.6114}
}

@inproceedings{edwards2017neural,
  author    = {Edwards, Harrison and Storkey, Amos},
  title     = {Towards a Neural Statistician},
  booktitle = {International Conference on Learning Representations},
  year      = {2017},
  url       = {https://arxiv.org/abs/1606.02185}
}

@inproceedings{bouchacourt2018mlvae,
  author    = {Bouchacourt, Diane and Tomioka, Ryota and Nowozin, Sebastian},
  title     = {{Multi-Level Variational Autoencoder}: Learning Disentangled Representations from Grouped Observations},
  booktitle = {Proceedings of the Thirty-Second AAAI Conference on Artificial Intelligence},
  pages     = {2095--2102},
  publisher = {AAAI Press},
  year      = {2018},
  doi       = {10.1609/aaai.v32i1.11867}
}

@article{garnelo2018neural,
  author  = {Garnelo, Marta and Schwarz, Jonathan and Rosenbaum, Dan and Viola, Fabio and Rezende, Danilo J. and Eslami, S. M. Ali and Teh, Yee Whye},
  title   = {Neural Processes},
  journal = {arXiv preprint arXiv:1807.01622},
  year    = {2018},
  url     = {https://arxiv.org/abs/1807.01622}
}

@inproceedings{garnelo2018conditional,
  author    = {Garnelo, Marta and Rosenbaum, Dan and Maddison, Chris J. and Ramalho, Tiago and Saxton, David and Shanahan, Murray and Teh, Yee Whye and Rezende, Danilo J. and Eslami, S. M. Ali},
  title     = {Conditional Neural Processes},
  booktitle = {Proceedings of the 35th International Conference on Machine Learning},
  series    = {Proceedings of Machine Learning Research},
  volume    = {80},
  pages     = {1704--1713},
  publisher = {PMLR},
  year      = {2018},
  url       = {https://proceedings.mlr.press/v80/garnelo18a.html}
}

@inproceedings{gordon2019versa,
  author    = {Gordon, Jonathan and Bronskill, John and Bauer, Matthias and Nowozin, Sebastian and Turner, Richard E.},
  title     = {Meta-Learning Probabilistic Inference for Prediction},
  booktitle = {International Conference on Learning Representations},
  year      = {2019},
  url       = {https://arxiv.org/abs/1805.09921}
}

@inproceedings{mnih2007pmf,
  author    = {Mnih, Andriy and Salakhutdinov, Ruslan R.},
  title     = {Probabilistic Matrix Factorization},
  booktitle = {Advances in Neural Information Processing Systems 20},
  pages     = {1257--1264},
  year      = {2007},
  url       = {https://proceedings.neurips.cc/paper/2007/hash/d7322ed717dedf1eb4e6e52a37ea7bcd-Abstract.html}
}

@inproceedings{salakhutdinov2008bpmf,
  author    = {Salakhutdinov, Ruslan and Mnih, Andriy},
  title     = {{Bayesian} Probabilistic Matrix Factorization Using {Markov Chain Monte Carlo}},
  booktitle = {Proceedings of the 25th International Conference on Machine Learning},
  pages     = {880--887},
  publisher = {ACM},
  year      = {2008},
  doi       = {10.1145/1390156.1390267}
}

@article{koren2009matrix,
  author  = {Koren, Yehuda and Bell, Robert and Volinsky, Chris},
  title   = {Matrix Factorization Techniques for Recommender Systems},
  journal = {Computer},
  volume  = {42},
  number  = {8},
  pages   = {30--37},
  year    = {2009},
  doi     = {10.1109/MC.2009.263}
}

@inproceedings{kang2018sasrec,
  author    = {Kang, Wang-Cheng and McAuley, Julian},
  title     = {Self-Attentive Sequential Recommendation},
  booktitle = {2018 IEEE International Conference on Data Mining},
  pages     = {197--206},
  publisher = {IEEE},
  year      = {2018},
  doi       = {10.1109/ICDM.2018.00035}
}

@inproceedings{liang2018multvae,
  author    = {Liang, Dawen and Krishnan, Rahul G. and Hoffman, Matthew D. and Jebara, Tony},
  title     = {Variational Autoencoders for Collaborative Filtering},
  booktitle = {Proceedings of the 2018 World Wide Web Conference},
  pages     = {689--698},
  publisher = {ACM},
  year      = {2018},
  doi       = {10.1145/3178876.3186150}
}

@inproceedings{he2017neural,
  author    = {He, Xiangnan and Liao, Lizi and Zhang, Hanwang and Nie, Liqiang and Hu, Xia and Chua, Tat-Seng},
  title     = {Neural Collaborative Filtering},
  booktitle = {Proceedings of the 26th International Conference on World Wide Web},
  pages     = {173--182},
  publisher = {International World Wide Web Conferences Steering Committee},
  year      = {2017},
  doi       = {10.1145/3038912.3052569}
}

@inproceedings{oh2019modeling,
  author    = {Oh, Seong Joon and Murphy, Kevin P. and Pan, Jiyan and Roth, Joseph and Schroff, Florian and Gallagher, Andrew C.},
  title     = {Modeling Uncertainty with Hedged Instance Embeddings},
  booktitle = {7th International Conference on Learning Representations},
  publisher = {OpenReview.net},
  year      = {2019},
  url       = {https://openreview.net/forum?id=r1xQQhAqKX}
}

@inproceedings{shi2019probabilistic,
  author    = {Shi, Yichun and Jain, Anil K.},
  title     = {Probabilistic Face Embeddings},
  booktitle = {2019 IEEE/CVF International Conference on Computer Vision},
  pages     = {6901--6910},
  publisher = {IEEE},
  year      = {2019},
  doi       = {10.1109/ICCV.2019.00700}
}

@article{steorts2016entity,
  author  = {Steorts, Rebecca C. and Hall, Rob and Fienberg, Stephen E.},
  title   = {A {Bayesian} Approach to Graphical Record Linkage and De-duplication},
  journal = {Journal of the American Statistical Association},
  volume  = {111},
  number  = {516},
  pages   = {1660--1672},
  year    = {2016},
  doi     = {10.1080/01621459.2015.1105807}
}

@incollection{steorts2018generalized,
  author    = {Steorts, Rebecca C. and Tancredi, Andrea and Liseo, Brunero},
  title     = {Generalized {Bayesian} Record Linkage and Regression with Exact Error Propagation},
  booktitle = {Privacy in Statistical Databases},
  editor    = {Domingo-Ferrer, Josep and Montes, Francisco},
  series    = {Lecture Notes in Computer Science},
  pages     = {297--313},
  publisher = {Springer International Publishing},
  year      = {2018},
  doi       = {10.1007/978-3-319-99771-1_20}
}

@article{kaplan2022practical,
  author  = {Kaplan, Andee and Betancourt, Brenda and Steorts, Rebecca C.},
  title   = {A Practical Approach to Proper Inference with Linked Data},
  journal = {The American Statistician},
  volume  = {76},
  number  = {4},
  pages   = {384--393},
  year    = {2022},
  doi     = {10.1080/00031305.2022.2041482}
}

@article{li2020ditto,
  author  = {Li, Yuliang and Li, Jinfeng and Suhara, Yoshihiko and Doan, AnHai and Tan, Wang-Chiew},
  title   = {{Ditto}: Deep Entity Matching with Pre-Trained Language Models},
  journal = {Proceedings of the VLDB Endowment},
  volume  = {14},
  number  = {1},
  pages   = {50--60},
  year    = {2020},
  doi     = {10.14778/3421424.3421431}
}

@article{rubin1974causal,
  author  = {Rubin, Donald B.},
  title   = {Estimating Causal Effects of Treatments in Randomized and Nonrandomized Studies},
  journal = {Journal of Educational Psychology},
  volume  = {66},
  number  = {5},
  pages   = {688--701},
  year    = {1974},
  doi     = {10.1037/h0037350}
}

@article{gong2024discoscm,
  author        = {Gong, Heyang and Lu, Chaochao and Zhang, Yu},
  title         = {Distribution-Consistency Structural Causal Models},
  journal       = {arXiv preprint arXiv:2401.15911},
  year          = {2024},
  eprint        = {2401.15911},
  archiveprefix = {arXiv},
  url           = {https://arxiv.org/abs/2401.15911}
}

@book{pearl2009causality,
  author    = {Pearl, Judea},
  title     = {Causality: Models, Reasoning, and Inference},
  edition   = {2},
  publisher = {Cambridge University Press},
  year      = {2009},
  doi       = {10.1017/CBO9780511803161}
}

@inproceedings{pawlowski2020dscm,
  author    = {Pawlowski, Nick and Coelho de Castro, Daniel and Glocker, Ben},
  title     = {{Deep Structural Causal Models} for Tractable Counterfactual Inference},
  booktitle = {Advances in Neural Information Processing Systems 33},
  pages     = {857--869},
  year      = {2020},
  url       = {https://proceedings.neurips.cc/paper/2020/hash/0987b8b338d6c90bbedd8631bc499221-Abstract.html}
}

@inproceedings{dai2019bridging,
  author    = {Dai, Wang-Zhou and Xu, Qiu-Ling and Yu, Yang and Zhou, Zhi-Hua},
  title     = {Bridging Machine Learning and Logical Reasoning by Abductive Learning},
  booktitle = {Advances in Neural Information Processing Systems 32},
  pages     = {2811--2822},
  year      = {2019},
  url       = {https://proceedings.neurips.cc/paper/2019/hash/9c19a2aa1d84e04b0bd4bc888792bd1e-Abstract.html}
}

@book{cover2005elements,
  author    = {Cover, Thomas M. and Thomas, Joy A.},
  title     = {Elements of Information Theory},
  edition   = {2},
  publisher = {Wiley},
  year      = {2005},
  month     = apr,
  isbn      = {9780471748823},
  doi       = {10.1002/047174882X},
  url       = {https://doi.org/10.1002/047174882X}
}
